\renewcommand*{\backrefalt}[4]{%
    \ifcase #1 \footnotesize{(Not cited.)}%
    \or        \footnotesize{(Cited on page~#2.)}%
    \else      \footnotesize{(Cited on pages~#2.)}%
    \fi}
\newtheorem*{rep@theorem}{\rep@title}
\newcommand{\newreptheorem}[2]{%
\newenvironment{rep#1}[1]{%
 \def\rep@title{#2 \ref{##1}}%
 \begin{rep@theorem}}%
 {\end{rep@theorem}}}
\newtheorem{theorem}{Theorem}[section]
\newtheorem{definition}{Definition}[section]
\newtheorem{assumption}[theorem]{Assumption}
\newtheorem{proposition}[theorem]{Proposition}
\newcommand{\E}{\mathbb{E}}
\newcommand{\T}{\top}
\newcommand{\var}{\textnormal{Var}}
\newcommand{\cov}{\textnormal{Cov}}
\newcommand{\vw}{\mathbf w}
\newcommand{\OCal}{\mathcal{O}}
\newcommand{\SCal}{\mathcal{S}}
\newcommand{\br}{\mathbb{R}}
\newcommand{\ba}{\begin{array}}
\newcommand{\ea}{\end{array}}
\newcommand{\NCal}{\mathcal{N}}
\newcommand*{\indep}{%
	\rotatebox[origin=c]{90}{$\models$}
}
\newcommand{\N}{\mathcal{N}}
\newcommand{\OR}{\textsc{OR}}
\newcommand{\CV}{\textsc{CV}}
\newcommand{\wg}[1]{\textcolor{teal}{[WG: #1]}}
\newcommand{\bigO}{\mathcal{O}}
\newcommand{\littleO}{o}
\newcommand*\samethanks[1][\value{footnote}]{\footnotemark[#1]}
\title{\bf{\LARGE{Multi-Source Causal Inference Using Control Variates}}}
\author[1]{Wenshuo Guo\thanks{wguo@cs.berkeley.edu. Wenshuo Guo and Serena Wang contributed equally to this work.}}
\author[1]{Serena Wang\samethanks}
\author[2]{Peng Ding}
\author[1]{Yixin Wang}
\author[1,2]{Michael I. Jordan}
\affil[1]{Department of Electrical Engineering and Computer Sciences, University of California, Berkeley}
\affil[2]{Department of Statistics, University of California, Berkeley}
\begin{document}
\sloppy

\maketitle

\begin{abstract} 
While many areas of machine learning have benefited from the increasing
availability of large and varied datasets, the benefit to causal
inference has been limited given the strong assumptions needed to ensure identifiability of causal effects; these are often not satisfied in real-world datasets. For example, many large observational datasets (e.g., case-control studies in epidemiology, click-through data in recommender systems) suffer from selection bias on the outcome, which makes the average treatment
effect (ATE) unidentifiable. We propose a general
algorithm to estimate causal effects from \emph{multiple} data sources, where
the ATE may be identifiable only in some datasets but not others. The
key idea is to construct control variates using the datasets in which the
ATE is not identifiable. We show theoretically that this reduces the variance of the ATE estimate. We apply this framework to inference from observational data under outcome selection bias, assuming access to an auxiliary small dataset from which we can obtain a consistent estimate of the ATE. We construct a control variate by taking the difference of the odds ratio estimates from the two datasets. Across simulations and two case studies with real data, we show that this control variate can significantly reduce the variance of the ATE estimate.

\end{abstract}

\section{Introduction}

The ongoing rapid growth in the scale and scope of data sources has challenged many research communities, including optimization, machine learning, and causal inference~\citep{hand2007principles,agarwal2012distributed,bottou2018optimization,shiffrin2016drawing}. In particular, in causal inference, there has been a surge of interest in developing tools to draw causal conclusions from large-scale observational data~\citep{imbens2015causal,pearl2009causality,maathuis2010predicting,kleinberg2013causal,maslove2019causal,wachinger2019quantifying}.  Compared to randomized trial designs, these observational data sources can often offer longitudinal data, fine-grained measurements, and much larger sample sizes. For example, electronic health data, including electronic health records (EHR) used for clinical care, may contain extensive details that include the timing, intensity, and quality of the interventions received by individuals. Moreover, randomized clinical trials are sometimes not feasible due to logistical, economic, or ethical reasons, and even when feasible can be seriously limited in terms of sample size~\citep{stuart2013estimating}. Thus, large-scale observational datasets hold open the promise of a much greater impact for causal inference methodology. 

However, conceptual problems arise in the observational data setting which can make causal effects unidentifiable or hard to estimate. The problems include unmeasured confounding, noisy measurements, inconsistency, and selection bias~\citep{rosenbaum1983central,angrist1996identification,nalatore2007mitigating,hernan2004structural}. These problems have generally been studied in the setting of a single observational data source, and in such a setting it is natural to view them through an all-or-nothing lens---either selection bias is present or it is not, either confounding is present or it is not, etc.  In such cases, causal inference is possible only when the data source satisfies certain delicate assumptions. These assumptions are often invalid; in particular, selection bias is notorious for being difficult to assume away---for example, in case-control datasets in epidemiological studies, cases are much more likely to be reported than non-cases; in observational data in recommender systems, certain items are more likely to receive clicks and ratings~\citep{rothman2008modern, robins2000marginal,robins2001data,  hernan2004structural,xuanhui2016learningtorank,schnabel2016recommendations,wang2020causal}. Under the existence of such selection bias, causal effects are in general unidentifiable.

In this paper, we take a different route and consider estimating causal effects from \textit{multiple} data sources. Can we combine large, possibly biased datasets with smaller, unbiased datasets to develop efficient estimators of causal effects? Our work is motivated by the observation that, in practice, we might be able to obtain a small dataset where the causal effects are identifiable; for example, from small-scale randomized trials or observational data with limited known confounding. Causal inference may not be efficient in such small datasets alone, due to limited sample size.  The hope is that large observational datasets, while not permitting causal inference by themselves, may be useful in improving the efficiency of the causal effects estimators from the small, unbiased dataset.

\paragraph{Contributions} We present an affirmative answer to this question in this paper, proposing a general framework to estimate causal effects from multiple data sources. Specifically, we show how to perform inference when the average treatment effect (ATE) may be identifiable only in some datasets but not others. The key is to use certain features that are robust and transportable across datasets to construct meaningful control variates. We show that such control variates allow us to design new ATE estimators which enjoy variance reduction, and we theoretically quantify the reduction in variance. To demonstrate the effectiveness of this general approach, we apply it to case-control studies with outcome selection bias. In particular, we construct control variates by taking the difference of the odds ratio estimates across datasets. We also present experiments, using synthetic data and two real-data case studies, to show that this control variate can significantly reduce the variance of ATE estimates using a variety of estimators.

\paragraph{Related work.} The problem of combining multiple datasets to estimate causal effects has attracted much recent research interest given the strong practical incentives, especially combining datasets from observational and experimental sources~\citep{colnet2020causal,rosenfeld2017predicting,rosenman2018propensity,rosenman2020combining, kallus2018removing}, where the observational data may suffer from hidden confounders and complex patterns of missing data. \citet{yang2020combining} propose estimators by combining a main dataset with unmeasured confounders and a smaller validation dataset with supplementary information on these confounders; \citet{cannings2019correlation} propose new estimators by combining datasets with complete cases and further observations with missing values, in order to improve on the performance of the complete-case estimator; 
Their technique is based on constructing multiple error-prone estimators that are transportable across the main and validation datasets. In fact, their ``error-prone estimators'' are used to design one particular choice of control variates in our setting. However, their estimators rely on the identifiability of the ATE in observational data, and therefore do not handle selection bias. 


Selection bias is induced by preferential selection of data points, and it is often governed by unknown factors that can interact with treatments, outcomes and their consequences. Operationally, selection bias cannot be easily eliminated by random sampling. There have been extensive studies on methods that deal with mitigating certain selection biases in observational studies. \citet{bareinboim2012controlling} discuss graphical and algebraic methods, and derive a general condition together with a procedure for recovering the odds ratio under selection bias. They also propose using instrumental variables for the removal of selection bias in the presence of confounding bias. \citet{zhang2008completeness} studies special cases in which selection bias can be detected even from the observations, as captured by a non-chordal undirected graphical component. \citet{robins2000marginal} and \citet{hernan2004structural} propose epidemiological methods that assume knowledge of the probability of selection given treatment, which can be estimated from data in certain cases.

The control variates technique is a classical tool for variance reduction, and there have been applications of it to causal effect estimation \citep{tan2006distributional}. Here we use the control variates technique for variance reduction in multi-source causal inference. The key technical development of our approach is the design of a valid control variate for multi-source causal inference. To this end, we propose to identify an estimand that is transportable between the observational data---i.e., the selection-biased dataset---to the experimental data. If this estimand has sufficient correlation with the target estimand of interest, it can be used to construct control variates. To this end, our work relates to the literature on transportability in causal inference \citep{bareinboim2014transportability,lee2020general,bareinboim2016datafusion}. These works investigate what causal quantities are identifiable, which suggests potential ways to construct control variates.

\section{Preliminaries}\label{sec:setup}

In this section, we present the basic setup for causal inference with multiple data sources. We also formalize our assumptions on the identification of causal effects.



\paragraph{Potential outcomes and ATE estimation.}
We use the potential outcomes framework to define causal effects \citep{neyman1923applications, rubin1974estimating}. Let $Z$ denote a binary treatment random variable, with $0$ and $1$ being the labels for control and active treatments,
respectively. For each realization of the level of treatment $z\in\{0,1\}$, we assume that there exists a potential outcome $Y(z)$ representing the outcome had the subject been given treatment
$z$ (possibly contrary to fact). Then, the observed outcome is $Y=Y(Z)=ZY(1)+(1-Z)Y(0)$. Further, we denote a vector of observed pretreatment covariates as $X \in \br^d$. We focus on estimating the ATE: $\tau = E[Y(1) - Y(0)]$.

\paragraph{Data sources.}
We consider a main data source that consists of observations $\OCal_1 =\{(Z_i, X_i, Y_i) : i \in \SCal_1\}$, with sample size $n_1 = |\SCal_1|$, and a validation data source with observations $\OCal_2 =\{(Z_j, X_j, Y_j) : j \in \SCal_2\}$ and sample size $n_2 = |\SCal_2|$. We assume that the ATE is identifiable from the validation data source but not necessarily the main data source, and generally $n_2 < n_1$. For simplicity we consider two data sources, but it is straightforward to generalize to more data sources.

A fundamental problem in causal inference is that the counterfactuals are not observable. Therefore, to allow for the identification of ATE, we make the following ignorability assumption \citep{rosenbaum1983central} with respect to the validation data $\OCal_2$.
\begin{assumption}[Ignorability]\label{assump-ignorable} $Y(z)\indep Z\mid X$
	for $z=0$ and $z=1$.
\end{assumption}

Under Assumption \ref{assump-ignorable}, many methods for estimating the ATE from a single observational dataset exist in the causal inference literature~\citep[see, e.g.,][]{rosenbaum2002,imbens2004,rubin2006}.

\section{Control Variates for Multi-Source Causal Inference: A General Strategy} \label{sec:methods}

We present a general strategy for efficient estimation of the ATE by utilizing both the main and validation data. Such a strategy allows us to design efficient ATE estimators using all the data, without requiring the ATE to be identifiable in all individual data sources. Informally, we want to identify features that are transportable across both datasets and robust to the type of confounding or bias affecting the main data. Using such features, we exploit information across the datasets and improve the efficiency of the ATE estimator using all datasets. This strategy is reminiscent of a control-variate methodology for general variance reduction in Monte Carlo simulations \cite{owen2013}.

Let $\psi \in \mathbb{R}^m$ be an estimand for which there exist consistent estimators obtainable from datasets $\OCal_1$ and $\OCal_2$ (with consistent estimators denoted by $\widehat{\psi}_1$ and $\widehat{\psi}_2$, respectively). The key requirement of \textit{transportability} is that $\widehat{\psi}_1 - \widehat{\psi}_2$ converges asymptotically to zero. Let $\widehat{\tau}_{2}$ denote a consistent estimator of the true ATE $\tau$ that we obtain using dataset $\OCal_2$ with asymptotic variance $v_2$. In particular, we consider a class of estimators satisfying 
\begin{align}\label{eq:asymp}
\begin{split}
	n_{2}^{1/2}\left(\begin{array}{c}
		\widehat{\tau}_{2}-\tau\\
		\widehat{\psi}_{2}-\widehat{\psi}_{1}
	\end{array}\right) 
	\rightarrow\N\left\{ 0,\left(\begin{array}{cc}
		v_{2} & \Gamma^{\T}\\
		\Gamma & V
	\end{array}\right)\right\},
\end{split}
\end{align}

for some $V \in \mathbb{R}^{m \times m}$ and $\Gamma \in \mathbb{R}^{m \times 1}$. If Eq.~(\ref{eq:asymp}) holds exactly rather than asymptotically, by multivariate normal theory, we have the following the conditional
distribution:
\begin{align*}
    n_{2}^{1/2}(\widehat{\tau}_{2}-\tau)\mid n_{2}^{1/2}(	\widehat{\psi}_{2}-	\widehat{\psi}_{1}) \sim \mathcal{N}\left\{ n_{2}^{1/2}\Gamma^{\T}V^{-1}(	\widehat{\psi}_{2}-	\widehat{\psi}_{1}),v_{2}-\Gamma^{\T}V^{-1}\Gamma\right\}.
\end{align*}

We apply the method of control variates~\cite{owen2013} by using the estimators for $\tau$ and $\psi$ jointly to build a new estimator of $\tau$ which has a lower variance than $\hat \tau_2$. Specifically, we construct a new estimator for ATE using control variates as follows: $\widehat{\tau}_{\CV}(\beta) = \widehat{\tau}_2 - \beta^\T (\widehat{\psi}_{2}-\widehat{\psi}_{1})$.
Solving for the optimal $\beta$, we obtain the new estimator 
\begin{equation}\label{eq:proposed-estimator-general}
	\widehat{\tau}_{\CV}=\widehat{\tau}_{2}-\Gamma^{\T}V^{-1}(\widehat{\psi}_{2}-\widehat{\psi}_{1}),
\end{equation}

where $V = \text{Var}(\widehat{\psi}_{2} - \widehat{\psi}_{1})^{-1}$ and $\Gamma = \text{Cov}(\widehat{\psi}_{2} - \widehat{\psi}_{1}, \widehat{\tau}_2)$.

\begin{theorem}(\citet{owen2013,yang2020combining})\label{thm:general-cv-var} Denote the asymptotic variance of $\widehat{\tau}_{2}$ as $v_2$. Under Assumption \ref{assump-ignorable}, if Eq.~\eqref{eq:asymp} holds, then $\widehat{\tau}_{\CV}$ is consistent for $\tau$, and we have: $n_{2}^{1/2}(\widehat{\tau}_{\CV}-\tau)\rightarrow\N(0,v_{2}-\Gamma^{\T}V^{-1}\Gamma),$
in distribution as $n_{2}\rightarrow\infty$ with ratio $ n_2 / n_1 \to \rho \in [0,1]$ converging to a constant. 
Given a nonzero $\Gamma$, the asymptotic variance, $v_{2}-\Gamma^{\T}V^{-1}\Gamma,$ is smaller than $v_2$.
\end{theorem}

From a practical standpoint, Theorem \ref{thm:general-cv-var} shows that the most effective control variate estimators $\widehat{\psi}_1 - \widehat{\psi}_2$ will have low variance and high correlation with the ATE estimator $\widehat{\tau}_2$. Empirically, to estimate the optimal value  of $\beta$, we can use estimators $\widehat{V}$ and $\widehat{\Gamma}$ for the variance and covariance in Eq.~\eqref{eq:proposed-estimator-general}. These estimators $\widehat{V}$ and $\widehat{\Gamma}$ can be obtained by bootstrap sampling from the given data, for which we present details in Appendix~\ref{app:bootstrap}.

\section{Control Variates for Outcome Selection Bias}\label{sec:cv_selection_bias}

We now present the constructions of new control variates to improve the efficiency of ATE estimates when the data suffer from selection bias on the outcome. Such selection bias on the outcome occurs in practice in case-control studies in epidemiology \cite{rothman2008modern, robins2001data, robins2000marginal}, and in recommender systems as a problem with implicit feedback \cite{xuanhui2016learningtorank,schnabel2016recommendations,wang2020causal}. However, current methodology for utilizing such selection-biased data for causal inference is limited. While we focus on selection bias for the rest of the paper, we discuss applications of the control variates strategy to other data settings in Section~\ref{sec:conclu}. 

We instantiate the main data $\mathcal{O}_1$ as observational data that suffers from selection bias on the outcome, from which the ATE is unidentifiable. Under selection bias on the outcome, we show that we are able to obtain consistent odds ratio estimates in various models, which can then be used to construct control variates across all datasets. 

\textbf{Outcome selection bias.} Outcome selection bias is induced by preferential selection of units based on the outcome. To illuminate the nature of this bias, consider the model of Figure \ref{fig:selection_bias_y}, in which $S \in \{0,1\}$ represents the selection mechanism: $S = 1$ means presence in the sample, and $S = 0$ means absence. Recall that
$X$ represents the pre-treatment covariates, $Z$ represents a binary treatment, and $Y$ represents a binary outcome. 

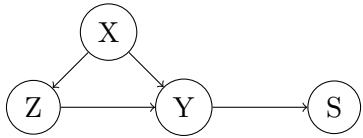
\begin{figure}[!h]
    \centering
    \begin{tikzpicture}
    \node[shape=circle,draw=black] (Z) at (0,0) {Z};
    \node[shape=circle,draw=black] (X) at (1,1) {X};
    \node[shape=circle,draw=black] (Y) at (2,0) {Y};
    \node[shape=circle,draw=black] (S) at (4,0) {S};
    \path [->] (X) edge node[left] {} (Y);
    \path [->] (Z) edge node[left] {} (Y);
    \path [->] (X) edge node[left] {} (Z);
    \path [->] (Y) edge node[left] {} (S);
    \end{tikzpicture}
    \caption{Causal graph for dataset $\OCal_1$ where
    there is a selection bias that depends on the outcome.}
    \label{fig:selection_bias_y}
\end{figure}

Under the existence of such selection bias on $Y$, the ATE is in general not identifiable, even if we assume that there are no unobserved confounding between $Z$ and $Y$. Thus, we consider the main data $\OCal_1$ to be a dataset suffering from outcome selection bias, and the validation data $\OCal_2$ to be unbiased (e.g., obtained from small-scale randomized trials).

\textbf{Variance reduction with the odds ratios.} Note that Eq.~\eqref{eq:proposed-estimator-general} applies to any estimand $\psi$, as long as it is correlated with ATE and we are able to obtain consistent estimators of it from both datasets. We show that when $\OCal_1$ suffers from outcome selection bias, we can use the odds ratios as the choice of $\psi$. This is based on two observations: First, the estimates for the conditional odds ratios and the ATE are correlated; second, the conditional odds ratios are robust to outcome selection bias under a variety of data generating processes. 


\begin{definition}\label{def:odds}
The conditional odds ratio between a binary treatment $Z$ and a binary outcome $Y$ conditioned on covariates $X$ is (for brevity, $x$ denotes $X = x$):
\begin{align*}
 \OR(x) = \frac{P(Y(1) = 1 | x) P(Y(0) = 0 |  x)}{P(Y(1) = 0 | x) P(Y(0) = 1 | x)}.
\end{align*}
\end{definition}
%


Under Assumption \ref{assump-ignorable}, we have $P(Y(1) = 1 |x) = P(Y(1) = 1 | Z=1,x) = P(Y = 1 | Z=1,x)$. Therefore, we can rewrite $\OR(x)$ in Definition~\eqref{def:odds} as: 
\begin{align*}
    \begin{split}
         &\OR(x) = \frac{P(Y = 1 | Z=1, x) P(Y = 0 | Z=0, x)}{P(Y = 0 | Z=1, x) P(Y = 1 | Z=0, x)}.
    \end{split}
\end{align*}

Proposition \ref{lem:or_selection_bias} further allows us to directly estimate the conditional odds ratios empirically with finite samples, even under selection bias.

\begin{proposition}\label{lem:or_selection_bias}(Proof in Appendix \ref{app:proofs})
If the selection $S$ depends solely on $Y$ (as in Figure \ref{fig:selection_bias_y}), then the conditional odds ratio is transportable and given by:
\begin{align*}
    \begin{split}
         \OR(x) = \frac{P(Y = 1 |S=1, Z=1,  x) P(Y = 0 | S=1,Z=0,  x)}{P(Y = 0 |S=1, Z=1,   x) P(Y = 1 |S=1, Z=0,   x)}.
    \end{split}
\end{align*}
\end{proposition}


Proposition \ref{lem:or_selection_bias} guarantees that the identifiability does not rely on any modeling assumption; see also \citet{didelez2010graphical} and \citet{jiang2017directions}. Therefore, to construct control variates, it is sufficient to derive consistent estimators of the odds ratio from the datasets $\OCal_1$ and $\OCal_2$. Let $\widehat{\OR}_{1}(x)$ and $\widehat{\OR}_{2}(x)$ denote consistent estimators for $\OR(x)$ obtained from the datasets $\OCal_1$ and $\OCal_2$, respectively. 
For a set of covariate values $\{x_1, \cdots, x_k\}$, one possible control variate construction is to take 
$\psi = \big(\OR(x_1) , \cdots, \OR(x_k)\big)^\top.$ Then $
\widehat{\psi}_1 = \big(\widehat{\OR}_{1}(x_1) , \cdots, \widehat{\OR}_{1}(x_k)\big)^\top$, $\widehat{\psi}_2 = \big(\widehat{\OR}_{2}(x_1) , \cdots, \widehat{\OR}_{2}(x_k)\big)^\top $.
Substituting these into Eq.~\eqref{eq:proposed-estimator-general} gives the new ATE estimator with control variates. 

When $X$ is continuous or there are too many discrete values of $X$, we can reduce the large number of conditional odds ratios $\OR(x)$ down to a manageable control variate by integrating $\OR(x)$ over a common distribution $F(x)$: let $\psi = \int \OR(x) F(\text{d}x)$. Then $\widehat{\psi}_{1} = \int \widehat{\OR}_{1}(x) F(\text{d} x)$ and $\widehat{\psi}_{2} = \int \widehat{\OR}_{2}(x) F(\text{d} x)$. 

\subsection{Estimating the odds ratio under selection bias}

The main difficulty in applying Eq.~\eqref{eq:proposed-estimator-general} is to find consistent estimators for constructing the control variates. We demonstrate that this can be achieved by estimating the conditional odds ratios parametrically using a logistic model with varying coefficients, or non-parametrically using kernel smoothing. For the consistency analysis of estimators for ATE and the odds ratios, we assume that the data points in $\OCal_1$ and $\OCal_2$ \textit{before} selection bias are IID samples from the same underlying population for $X, Z, Y$.

\textbf{Logistic outcome model.}\label{sec:logistic_outcome} One approach to estimating the odds ratio $\OR(x)$ uses a logistic model with varying coefficients ~\cite{cleveland1991local} to parameterize the outcome distribution:
\begin{equation}\label{eq:varying-coeff}
    P(Y=1 | Z = z, x) = \frac{e^{\beta_0^x + \beta_1^x z}}{1 + e^{\beta_0^x + \beta_1^x z}}.
\end{equation}
Here, $\beta_0^x, \beta_1^x$ are coefficients that depend on the covariates $x$. If $X$ is discrete and finite, then there would be a discrete and finite number of parameters $\beta_0^x, \beta_1^x$. Otherwise, $\beta_0^x$ and $\beta_1^x$ can be viewed as functions of $x$.


If the data is truly generated by the outcome model defined in Eq.~\eqref{eq:varying-coeff}, then Theorem \ref{thm:OR-robust-selection-bias} below shows that selection bias on the outcome will not change the coefficient $\beta_1^x$ across $\OCal_1$ and $\OCal_2$. Furthermore, $\beta_1^x$ is the only parameter needed to compute the conditional odds ratio $\OR(x)$. Thus, any consistent estimates of $\beta_1^x$ for both $\OCal_1$ and $\OCal_2$ would provide consistent estimates of the conditional odds ratio that are robust to selection bias.
\begin{theorem}\label{thm:OR-robust-selection-bias}(Proof in Appendix \ref{app:proofs})
If the selection $S$ depends solely on $Y$ (as in Figure \ref{fig:selection_bias_y}) and $P(Y=1 |  Z = z, X = x)$ follows the logistic model in \eqref{eq:varying-coeff}, then $P(Y=1 | Z = z,X = x,  S = 1)$ also follows a logistic model, with the same coefficient $\beta_1^x$ on $Z$ as the logistic model for $P(Y=1 |  Z = z, X = x)$ for each covariate value $x$. Moreover, the conditional odds ratio is $\OR(x) = e^{\beta_1^x}$.
\end{theorem}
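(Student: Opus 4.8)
The plan is to prove the two claims separately, and both reduce to short computations with odds. First I would establish the identity $\OR(x) = e^{\beta_1^x}$. Under Assumption~\ref{assump-ignorable} the conditional odds ratio can be written as in the display following Definition~\ref{def:odds}, namely $\OR(x) = \frac{P(Y=1\mid Z=1,x)\,P(Y=0\mid Z=0,x)}{P(Y=0\mid Z=1,x)\,P(Y=1\mid Z=0,x)}$. Plugging in the logistic form~\eqref{eq:varying-coeff}, the conditional odds of $Y=1$ given $Z=z$ is exactly $P(Y=1\mid Z=z,x)/P(Y=0\mid Z=z,x) = e^{\beta_0^x+\beta_1^x z}$, so $\OR(x)$ is the ratio of this quantity at $z=1$ and $z=0$, which is $e^{\beta_0^x+\beta_1^x}/e^{\beta_0^x} = e^{\beta_1^x}$. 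This computation also makes transparent why only $\beta_1^x$ is needed: the intercept $\beta_0^x$ cancels.

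For the transportability claim I would apply Bayes' rule to write
\[
P(Y=1\mid Z=z,X=x,S=1) = \frac{P(S=1\mid Y=1,Z=z,x)\,P(Y=1\mid Z=z,x)}{P(S=1\mid Z=z,x)},
\]
and similarly with $Y=0$ in place of $Y=1$. The graph in Figure~\ref{fig:selection_bias_y} has $Y$ as the only parent of $S$, so by d-separation $S \indep (Z,X)\mid Y$; hence $P(S=1\mid Y=y,Z=z,x) = P(S=1\mid Y=y) =: s_y$, which depends on neither $z$ nor $x$. Taking the ratio of the $Y=1$ and $Y=0$ expressions, the common denominator $P(S=1\mid Z=z,x)$ cancels and we obtain
\[
\frac{P(Y=1\mid Z=z,x,S=1)}{P(Y=0\mid Z=z,x,S=1)} = \frac{s_1}{s_0}\cdot\frac{P(Y=1\mid Z=z,x)}{P(Y=0\mid Z=z,x)} = \frac{s_1}{s_0}\,e^{\beta_0^x+\beta_1^x z}.
\]
Taking logs, the log-odds of $Y$ in the selected sample equals $\bigl(\beta_0^x + \log(s_1/s_0)\bigr) + \beta_1^x z$, which is again a logistic model in $z$ with intercept $\tilde\beta_0^x := \beta_0^x + \log(s_1/s_0)$ and unchanged slope $\beta_1^x$ on $Z$. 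Re-running the first computation inside the selected sample — equivalently, invoking the odds-ratio formula of Lemma~\ref{lem:or_selection_bias} — then yields $\OR(x) = e^{\beta_1^x}$ there as well.

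There is essentially no hard step here; the whole argument is two applications of ``odds multiply under reweighting.'' The one point to be careful about is the bookkeeping of the conditional independence: one must read off from Figure~\ref{fig:selection_bias_y} that $S$ is independent of the \emph{pair} $(Z,X)$ given $Y$ (not merely of $Z$, or of $X$, separately), which is precisely what makes $s_y$ free of both $z$ and $x$ and lets it be absorbed entirely into the intercept without perturbing the coefficient on $Z$. I would also record a mild positivity caveat — $s_0, s_1 > 0$, so that conditioning on $S=1$ is well defined and the odds are finite and nonzero — and note that the argument is entirely at the population level, so that consistency of the plug-in estimators of $\beta_1^x$ (via logistic regression or kernel-smoothed logistic regression) is a separate matter addressed in the subsequent subsections.
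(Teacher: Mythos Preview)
Your proposal is correct and follows essentially the same approach as the paper: both arguments apply Bayes' rule together with the conditional independence $S \indep (Z,X)\mid Y$ to show that selection on $Y$ only shifts the logistic intercept by $\log(s_1/s_0)$, leaving $\beta_1^x$ unchanged, and both verify $\OR(x)=e^{\beta_1^x}$ by direct substitution of the logistic form. The only cosmetic differences are that you work with odds throughout (which makes the cancellation slightly cleaner) and reverse the order of the two claims; your explicit remarks on the joint conditional independence and the positivity of $s_0,s_1$ are helpful additions not spelled out in the paper.
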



Theorem \ref{thm:OR-robust-selection-bias} extends \citet{prentice1979logistic}; see also \citet{agresti2015foundations}.  
By Theorem \ref{thm:OR-robust-selection-bias}, we need only compute consistent estimators $\widehat{\beta}_{1, \OCal_1}^x$ and $\widehat{\beta}_{1, \OCal_2}^x$ of $\beta_1^x$ from $\OCal_1$ and $\OCal_2$ to produce consistent estimators of the true underlying conditional odds ratio $\OR(x)$, $\widehat{\OR}_1(x) = e^{\widehat{\beta}_{1, \OCal_1}^x}, \widehat{\OR}_2(x) = e^{\widehat{\beta}_{1, \OCal_2}^x}$.

When $X$ is discrete, we can obtain such consistent estimators $\widehat{\beta}_{1, \OCal_1}^x$ and $\widehat{\beta}_{1, \OCal_2}^x$ by stratifying the data on $X$ and performing logistic regression within each stratum. Let $\widehat{\beta}_{1, \OCal_2}^x$ be the maximum likelihood estimator for $\beta_{1, \OCal_2}^x$ in the stratum (or subset of data) with $X = x$ from $\OCal_2$ (and $\widehat{\beta}_{1, \OCal_1}^x$ be the same for $\OCal_1$). These maximum likelihood estimators are consistent estimators for the true $\beta_{1}^x$. 

For continuous $X$, producing a theoretically consistent estimator for $\beta_{0}^x, \beta_{1}^x$ is more challenging. One technique is to assume parametric models for the functions $\beta_{0}^x = f_0(x, \theta_0)$, $\beta_{1}^x = f_1(x, \theta_1)$.
For example, if these functions are linear (i.e., $f_0(x, \theta_0) = \theta_0^\top x,$ $f_1(x, \theta_1) = \theta_1^\top x$), then the problem of estimating $\beta_{1}^x$ reduces to maximum likelihood estimation of $\theta_1$ over a logistic model:
$P(Y=1 | Z = z, x) =  e^{\theta_0^\top x + \theta_1^\top x z} / (1 + e^{\theta_0^\top x + \theta_1^\top x z}).$
We may also allow $f_0(x, \theta_0), f_1(x, \theta_1)$ to take more general functional forms, such as neural networks. Depending on the complexity of the functions, it becomes more challenging to  guarantee asymptotic consistency theoretically and obtain a rate of convergence to the true $\beta_{1}^x$. However, such methods may still work well in practice. We explore their empirical performance in Section \ref{sec:experiments_real}.

\textbf{Kernel smoothing.}\label{sec:kernel} When $X$ is continuous, we can also estimate the odds ratio using kernel smoothing without making any parametric assumptions on the exact outcome model or functional form of $\beta_1^x$. First, notice that
$
\OR(x) 
= \frac{\E[YZ|x]\cdot \E[(1-Y)(1-Z)|x]}{\E[Y(1-Z)|x]\cdot \E[(1-Y)Z|x]}.
$
Further,  by Proposition \ref{lem:or_selection_bias},
$$
\OR(x)  
	= \frac{\E[YZ|S=1, x]\cdot \E[(1-Y)(1-Z)|S=1,x]}{\E[Y(1-Z)|S=1,x]\cdot \E[(1-Y)Z|S=1,x]}.
$$
Therefore, estimating $\OR(x)$ is equivalent to estimating $\E[W|x]$ and $\E[W|S=1, x]$ from $\OCal_1$ and $\OCal_2$, respectively, where $W \in \{YZ, (1-Y)(1-Z), Y(1-Z), Z(1-Y)\}$. Choose a kernel function $K(\cdot)$ and the bandwidth $\lambda$.
Given a dataset with $n$ data points $(X_i, Y_i, Z_i)_{i=1}^n$, for a random variable $W \in \{YZ, (1-Y)(1-Z), Y(1-Z), Z(1-Y)\}$, $\hat \E[W|x] =  \sum_{i=1}^N K(\frac{x-X_i}{\lambda}) W_i / \sum_{i=1}^N K(\frac{x-X_i}{\lambda}) .$
Therefore, the kernel estimator is:
\begin{align*}
    	\widehat{\OR}(x) = \frac{\sum_{i=1}^N K(\frac{x-X_i}{\lambda})Y_i Z_i \sum_{i=1}^N K(\frac{x-X_i}{\lambda})(1-Y_i)(1-Z_i)}{\sum_{i=1}^N K(\frac{x-X_i}{\lambda})Y_i(1-Z_i)K(\frac{x-X_i}{\lambda})(1-Y_i)Z_i}.
\end{align*}

This estimator is consistent under selection bias on the outcome as shown by Proposition \ref{lem:or_selection_bias}.
However, unlike the parametric estimators using the MLE, we note that the asymptotic convergence of the kernel estimator depends on the bandwidth $\lambda$ and the dimensionality $d$. We provide further analysis of this convergence for the odds ratio in Appendix \ref{app:kernel}.

\section{Simulation Experiments}\label{sec:experiments_sim}

We first demonstrate the finite-sample performance of estimators with and without the proposed control variates in a simulation study. We simulate an observational dataset with confounding from $X$ using a logistic model adapted from \citet{zhang2009estimatingoddsratio}. 



\textbf{Data generation.} We generate the dataset $\OCal_2$ by sampling $n_2$ samples from the following data-generating process. Let $X \in \mathbb{R}^2$ have two components $X_1, X_2$, which are IID Bernoulli($p=0.5$). Given $X$, the treatment assignment $Z$ is distributed as 
$P(Z = 1 | X = x) =  e^{a_0 + a_1^{\top}x} / ( 1 + e^{a_0 + a_1^{\top}x}).$ As done by \citet{zhang2009estimatingoddsratio}, the outcome $Y$ is generated from a logistic model with an interaction term between $X$ and $Z$ parameterized by $\{\beta_i\}_{i=0}^3$:
\begin{equation}\label{eq:logistic_interaction}
    P(Y = 1 | Z = z, x) = \frac{e^{\beta_0 + \beta_1 z + \beta_2^{\top} x + \beta_3^{\top} xz}}{1 + e^{\beta_0 + \beta_1 z + \beta_2^{\top} x + \beta_3^{\top} xz}}.
\end{equation}
We specifically set $\beta_3 \neq 0$ so that the conditional odds ratio varies as a function of $x$. Full details with exact parameters settings are given in Appendix \ref{app:experiments_sim_data}.
To generate  $\OCal_1$, we first draw $(Z_i, X_i, Y_i)_{i=1}^N$ samples from the same data-generating process as  $\OCal_2$, and include each sample $(Z_i, X_i, Y_i)$ in $\OCal_1$ with probabilities $P(S_i = 1 | Y_i = 1) = 0.9$ and $P(S_i = 1 | Y_i = 0) = 0.1$. This simulates selection bias in favor of positive outcomes as happens in case-control studies in practice. 

\begin{figure*}[!ht]
\centering
\begin{tabular}{ccc} 
\includegraphics[width=0.31\textwidth]{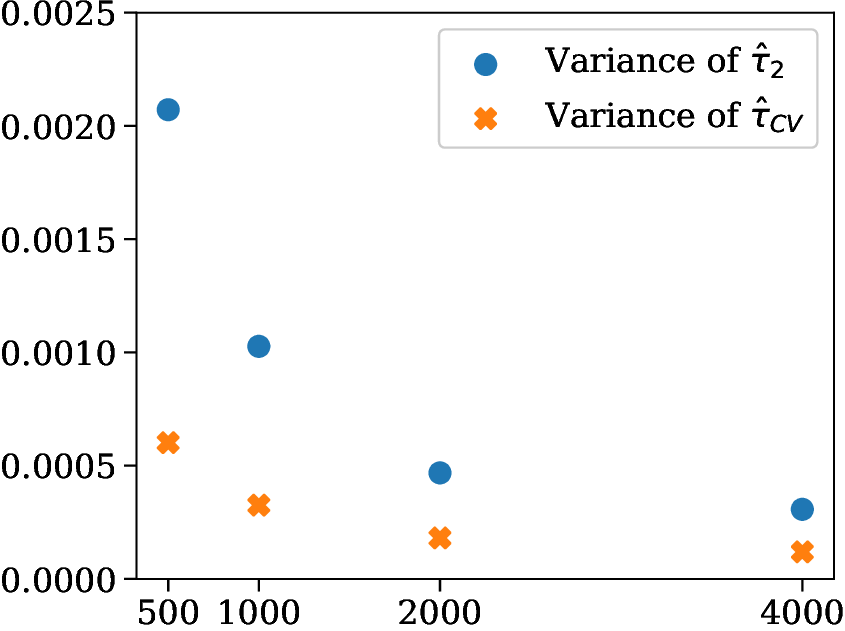} & 
\includegraphics[width=0.31\textwidth]{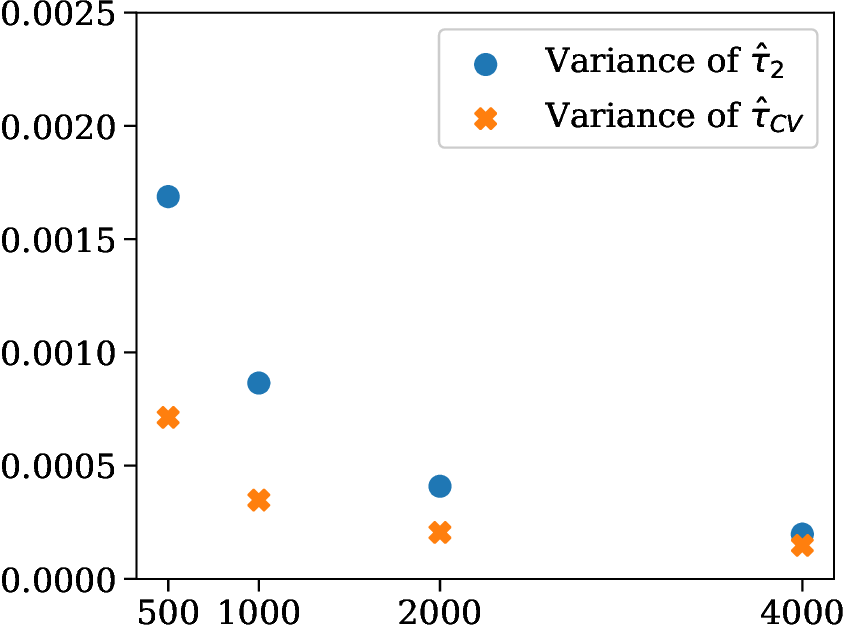} & 
\includegraphics[width=0.31\textwidth]{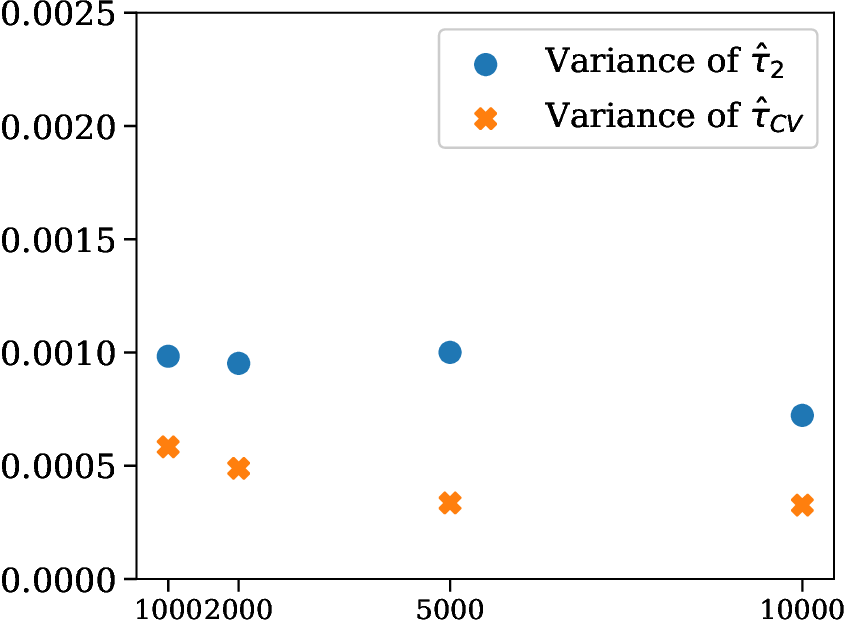}\\
\scriptsize{$n_2$} & \scriptsize{$n_2$} & \scriptsize{$n_2$} \\
\end{tabular}
\caption{\footnotesize{\textbf{Scenario 1 (left):} Comparisons of variance as the size of the observational dataset $n_2$ increases under the general logistic model. The ratio $n_2/n_1$ is kept constant at $1/10$. \textbf{Scenario 2 (middle):} Comparisons of variance as the size of the observational dataset $n_2$ increases under the general logistic model. The size of the selection bias dataset is fixed at $n_1=10000$. \textbf{Scenario 3 (right):} Comparisons of variance as the size of the selection bias dataset $n_1$ increases under the general logistic model. The size of the observational dataset is fixed at $n_2=1000$. \textit{Lower is better.}}}\label{fig:var_reg} 
\end{figure*}

\textbf{Estimating the ATE.} 
To obtain an estimate $\widehat{{\tau}}_{2}$ of the ATE from $\OCal_2$, we use a parametric imputation estimator. Denote the coefficient and intercept resulting from logistic regression of $Y$ on $Z$ for stratum $X=x$ as $\widehat{\beta}_1^{x}$ and $\widehat{\beta}_0^{x}$. The regression imputation estimator of the ATE is given by:
\begin{equation}\label{eq:regression_ate}
    \widehat{\tau}_{2} = n_2^{-1}\sum_{i=1}^{n_2} \bigg \{\frac{e^{\hat \beta_0^{X_i} + \hat \beta_1^{X_i}}}{1+e^{\hat \beta_0^{X_i} + \hat \beta_1^{X_i}}}  - \frac{e^{\hat \beta_0^{X_i}} }{1+e^{\hat \beta_0^{X_i}}}\bigg \}.
\end{equation}
This logistic regression model is well specified as it coincides with the true data-generating model.


\textbf{Estimating the control variate.}
To estimate the conditional odds ratio, we perform logistic regression of $Y$ on $Z$ on each stratum with $X = x$ to obtain estimates of $\beta_1^{x}$ from both $\OCal_1$ and $\OCal_2$, which produces estimates $\widehat{\OR}_1(x), \widehat{\OR}_2(x)$ as described in Section \ref{sec:logistic_outcome}. 
To compute the proposed control variates estimator $\widehat{\tau}_{\CV}$ (Eq.~\eqref{eq:proposed-estimator-general}), we ran $B=100$ bootstrap replicates to estimate the covariances $\Gamma$ and $V$, which we use estimate the optimal coefficient $\widehat{{\Gamma}}^{\top} \widehat{{V}}^{-1}$ for the control variate.

\subsection{Results}

Figure \ref{fig:var_reg} compares the variance for the ATE estimator $\widehat{{\tau}}_{2}$ with the variance of the ATE estimator with control variates $\widehat{{\tau}}_{\CV}$ for three different finite-sample scenarios varying $n_1$ and $n_2$ (full scenario descriptions in Appendix \ref{app:experiments_sim_scenarios}). The variances of these estimators are measured over $B=100$ bootstrap replicates. Throughout all three scenarios, the estimator with control variates $\widehat{{\tau}}_{\CV}$ had significantly reduced variance compared to $\widehat{{\tau}}_{2}$ alone. 
However, the impact of increasing $n_1$ and $n_2$ varies, with $n_2$ mattering much more for improving the variance. Figure \ref{fig:var_reg} shows that the variance of $\widehat{{\tau}}_{2}$ and $\widehat{{\tau}}_{\CV}$ both decrease significantly as $n_2$ increases, even if the ratio $n_2/n_1$ is not necessarily fixed. Consistent with these findings, in Figure \ref{fig:var_reg}, we also observe that when there is a limited fixed amount of observational data $n_2$, increasing the amount of selection-biased data does not seem to significantly improve the variance of the estimator with control variates, $\widehat{{\tau}}_{\CV}$. We further report the bias of each estimator over the bootstrap replicates in Appendix \ref{app:experiments_sim} (Figures \ref{fig:bias_reg_ratio-fixed_varying-coef},  \ref{fig:bias_reg_n1-fixed_varying-coef}, and \ref{fig:bias_reg_n2-fixed_varying-coef}). In general, the bias decreases as $n_2$ increases, and is not significantly different with or without control variates.

\section{Real-Data Case Studies}\label{sec:experiments_real}

In addition to the full simulation, we evaluated the performance of the proposed control variates on two case studies with public datasets. All code will be made publicly available. 

\textbf{Case study 1: flu shot encouragement with selection bias from case-control studies:} We consider a flu shot encouragement experiment dataset that has been repeatedly studied in the causal inference literature \citep{mcdonald1992effects,hirano2000assessing,ding2017principal}. As done in prior work, we use data from 1980 for $2,861$ patients collected from an encouragement experiment in which participating physicians were assigned treatments $Z$ uniformly at random, where $Z = 1$ indicates that the patient's physician was sent a letter encouraging them to vaccinate their patients (and $Z=0$ otherwise). The binary outcome $Y$ is whether the patient was hospitalized for flu-related reasons the following winter. The dataset contains eight additional covariates $X$, one of which is a continuous \textit{age} variable, and seven of which are binary indicators of prior medical conditions of the patient (e.g., history of heart disease). We do not consider the intermediate variable of whether or not the patient received a flu shot.

We set $\mathcal{O}_2$ to be this original dataset.
For the second dataset $\mathcal{O}_1$, we consider a realistic scenario where an additional observational dataset exists consisting mostly of patients who have already been hospitalized for flu-related reasons. Observational studies consisting mostly of positive outcomes are common in epidemiology as case-control studies. 
Such a dataset may be easier to collect than the original encouragement experiment, since such data may already be available from hospitals without setting up an explicit controlled experiment.
We simulate $\mathcal{O}_1$ by training a logistic model with interaction terms parameterized by Eq.~\eqref{eq:logistic_interaction} on the original dataset $\mathcal{O}_2$,
and generating samples according the fitted distribution (details in Appendix \ref{app:exp_details}). We assume that the ``true'' ATE is given by this model.

\textbf{Case study 2: spam email detection with selection bias from implicit feedback:} For a second case study, we use a dataset constructed for the Atlantic Causal Inference Conference (ACIC) 2019 Data Challenge based on the Spambase dataset for spam email detection from UCI \citep{acic2019,DuaUCI}. The dataset consists of emails with outcome of interest $Y$ being whether or a user marked the email as spam. The treatment $Z$ is whether or not the email contains more than a given threshold of capital letters (the threshold is computed by a mean over the full dataset). There are 22 continuous covariates $X$ which are word frequencies given as percentages. The ACIC competition does not use the original data from UCI directly, but instead generates modified versions using pre-specified data generating processes with known true ATE. 
We generate $\mathcal{O}_2$ using ACIC's data generating process, for which we provide more details in Appendix \ref{app:exp_details}.

For $\mathcal{O}_1$, we generate data from the same data generating process and apply selection bias $P(S = 1|Y = 1) = 0.9$ and $P(S=1| Y=0) = 0.1$ to produce $n_1 = 30,000$ examples. This simulates a practical scenario where a user marking an email as spam constitutes explicit feedback, but a user \textit{not} taking action to mark an email as spam constitutes \textit{implicit} feedback. This implicit feedback is unreliable, since if a user does not mark an email as spam, there is no guarantee that the user actually even read the email in full. This implicit feedback problem and resulting selection bias has been repeatedly identified as a fundamental challenge in learning-to-rank systems and recommender systems \citep{xuanhui2016learningtorank,schnabel2016recommendations,wang2020causal}. Disregarding the unreliable implicit feedback, the resulting dataset containing only explicit feedback is subject to selection bias on the outcome $Y=1$ of being marked as spam, rendering the ATE non-identifiable. In our experiment, $\mathcal{O}_2$ is assumed to be a small curated dataset without the implicit feedback problem, which may be constructed by, e.g., asking users to explicitly mark emails as ``not spam.'' We ran two experiments to illustrate the effects of the size of this curated dataset, one with $n_2 = 3,000$ and one with a larger $n_2 = 10,000$.

\subsection{Estimators and implementation}

To handle the continuous $X$ values, we estimate the conditional odds ratio for a finite set of values $\mathcal{X}$ taken from $\mathcal{O}_2$, and set the control variate $\psi$ to be an average over all of these conditional odds ratios. Furthermore, for these datasets with continuous covariates $X$, we found that using the log conditional odds ratio was more effective as a control variate as it had lower variance for extreme values of $X$. We report results with the control variate estimand $\psi =  |\mathcal{X}|^{-1} \sum_{x \in \mathcal{X}} \log \OR(x)$. To estimate the ATE $\widehat{\tau}_2$ and the odds ratios for the control variates, we apply three different methods:

\textbf{Logistic model with interaction:} We first apply a logistic model with an interaction term between $X$ and $Z$ (Eq.~\eqref{eq:logistic_interaction}) to estimate both the ATE and conditional odds ratios (details in Appendix \ref{app:exp_details}). Since this is the same model used to generate the flu shot encouragement dataset $\OCal_1$, there is no model misspecification when using this estimator for case study 1. We set $\mathcal{X}$ to be all $X_i$ in $\OCal_2$.

\textbf{Neural network:} To allow for more flexibility, we use a neural network to estimate a logistic outcome model with varying coefficients (Eq.\eqref{eq:varying-coeff}), where $\beta_0^x = f_0(x; \theta)$, $\beta_1^x = f_1(x; \theta)$ are outputs of the neural network with parameters $\theta$. The optimization objective is the logistic loss on the final outcome prediction, and we choose the neural network architecture using five-fold cross validation (more details in Appendix \ref{app:exp_details}). We estimate both the ATE and the odds ratio using this neural network varying coefficient model. We only report results with the neural network on the spam email dataset since the flu dataset contains a small number of mostly binary covariates, and the added flexibility from the neural network does not provide much additional benefit. We set $\mathcal{X}$ to be all $X_i$ in $\OCal_2$.

\textbf{Kernel smoothing:} As a third technique, we estimate the ATE using the logistic model in Eq.~\eqref{eq:logistic_interaction}, but apply kernel smoothing to estimate the odds ratios for the control variate (as in Section \ref{sec:kernel}). This non-parametric estimate gets around any problems of model misspecification when estimating the odds ratio.
We set $\mathcal{X}$ to be a random sample of $50$ values of $X_i$ from $\OCal_2$. 

As done in the simulation study, we ran $B=300$ bootstrap replicates to estimate the variance of the ATE estimate and the covariance between the ATE estimate and the odds ratio control variates, which we use to compute the optimal coefficient $\widehat{{\Gamma}}^\top \widehat{{V}}^{-1}$ for the control variate. 

\subsection{Results}

Table \ref{tab:flu_acic_var} reports the variances of the ATE estimators with and without the proposed control variates, where the variance is computed over $B=300$ bootstrap replicates. For both case studies with all estimator combinations, the variance is reduced with the control variates. However, the amount that the variance is reduced varies significantly. For the flu dataset, the variance reduction is significantly higher than for the spam email dataset. This is likely because on the flu dataset, there is no model misspecification, as we generate $\mathcal{O}_1$ using the same logistic regression model as we use to estimate the ATE and odds ratios. 

On both case studies, kernel smoothing performs better than the logistic and neural network estimators. The kernel smoothing estimator performs significantly better than the explicit modeled estimators on the spam email dataset, which we hypothesize is due to model misspecification for the logistic regression model, and overfitting for the neural network model. Interestingly, the variance reduction of the neural network is better than the simpler logistic model when there are more samples $n_2$, which suggests possible overfitting for the neural network when $n_2$ is too small.


\begin{table}[!ht]
\caption{Variances for both the flu shot encouragement data with $n_1=10,000$ (Case Study 1) and the spam email detection data with $n_1 = 30,000$ (Case Study 2).}
\label{tab:flu_acic_var}
\centering
\resizebox{\columnwidth}{!}{
\begin{tabular}{lcc|ccc|ccc}
\toprule
& \multicolumn{2}{c}{Case Study 1} & \multicolumn{3}{c}{Case Study 2: $n_2 = 3,000$} & \multicolumn{3}{c}{Case Study 2: $n_2 = 10,000$} \\
\cmidrule{2-9}
Metric & Logistic & Kernel & Logistic & NN & Kernel & Logistic & NN & Kernel \\
\midrule
Var (\% diff) & $71.160$\%  & $77.599$\% & $3.522$\%  & $0.638$\% & $21.231$\% & $0.874$\%  & $5.940$\% & $23.134$\%\\
Var $\widehat{\tau}_2$ & \num{1.023E-04} & \num{1.072E-04} & \num{4.576e-4} & \num{1.074e-3} & \num{4.309e-4} & \num{1.443e-4} & \num{4.762e-4} & \num{1.351e-4}\\
Var $\widehat{\tau}_{\CV}$ & \num{2.952E-05} & \num{2.400E-05} & \num{4.415e-4} & \num{1.067e-3} & \num{3.394e-4} & \num{1.430e-4} & \num{4.479e-4} & \num{1.038e-4} \\
\bottomrule
\end{tabular}
}
\vskip -0.1in
\end{table}

Variance reduction sometimes came at a cost of higher bias, which was more pronounced on the spam email dataset (Table \ref{tab:acic_bias} in the Appendix). This speaks to the care that is needed in managing the bias/variance tradeoff via control variates when there is possible model misspecification.

\section{Conclusions and Further Connections} \label{sec:conclu}

Combining multiple data sources has the potentials of mitigating bias and improving efficiency in causal inference. We provide a general framework for combining multiple data sources for more efficient estimates of causal effect. We further illustrate this framework to improve ATE estimates given data subject to outcome selection bias, a common complication in epidemiology with limited existing methodology. 



\citet{yang2020combining} study the problem of combining multiple observational data sources with possibly unmeasured confounding, and propose a technique for this setting which can be mapped exactly onto our control variates framework. They consider a setting with two observational datasets where one large dataset contains unmeasured confounding, while another smaller dataset contains supplementary information on the confounders. They assume the two datasets have the same confounding structure (both observed and unobserved), and construct control variates which are error-prone estimators for the ATE that are transportable across the datasets. However, their approach cannot deal with outcome selection bias, which is an important problem in empirical research. We provide a practical solution to this problem based on our framework.


Our control variates framework is also applicable to other data combination settings. For example, combining randomized control trials with observational studies is another practically important problem that has received significant historical and recent attention. 
We can design other control variates by finding other quantities that are transportable between the two data sources. One example is the conditional ATE defined by some observed covariates. This quantity is transportable when both datasets share the same causal connection between the covariates and the outcome; only the connection between the covariates and the treatment differs. Therefore, we can use the difference between these conditional ATEs as a control variate. When these conditional ATEs are not identified, their corresponding error-prone estimators can lead to an equally valid control variate. Full theoretical and empirical evaluation of control variates in this setting would be valuable future work.

On a broader societal level, this methodology, like many other ATE estimation approaches, should be used with careful validation in settings where model assumptions are satisfied.



\section*{Acknowledgements}
This work was supported in part by the Mathematical Data Science program of the Office of Naval Research under grant number N00014-18-1-2764.

\bibliographystyle{plainnat}
\bibliography{ref}

\newpage\appendix

\section{Further Analysis of the Kernel Estimator}\label{app:kernel}

In this section, we analyze the asymptotic properties of the kernel odds ratio estimator. We work with the log of the odds ratio for an easier analysis of the asymptotic properties.
\begin{theorem}
Given a symmetric kernel function $K(u)$ such that $\int K(u)du = 1$, $\int uK(u)du=0$, and $\lambda$ is the bandwidth. Consider the kernel odds ratio estimator as defined in Section~\ref{sec:kernel}. Denote $f(x)$ as the density of $X, X \in \br^d$ and consider $\lambda = \littleO(\frac{1}{N})^{\frac{1}{d+4}}$. Define $\vw \triangleq (YZ, (1-Y)(1-Z), Y(1-Z), Z(1-Y))^\top$, $\Sigma(x)\triangleq\cov(\vw|x)$, $g(x)= (g^1(x), g^2(x), g^3(x), g^4(x))^\top\triangleq\E[\vw|x]$, and $A(x)\triangleq(\frac{1}{g^1(x)}, \frac{1}{g^2(x)},\frac{1}{g^3(x)},\frac{1}{g^4(x)})^\top$.  Then, 
\begin{equation*}
(n_{2}\lambda^d)^{1/2}(\log\widehat \OR(x)- \log \OR(x)) \rightarrow \mathcal{N}\left(0,   A^\top(x)\Sigma(x)A(x) f(x)^{-1}\int K^2(u)du\right),
\end{equation*}
in distribution as $N \rightarrow \infty$. 


\end{theorem}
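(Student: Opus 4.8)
\noindent\emph{Proof plan.} The plan is to express $\widehat{\OR}(x)$ as a smooth function of a vector of four Nadaraya--Watson kernel regressions, establish a joint central limit theorem for that vector, observe that the bandwidth condition makes the smoothing bias asymptotically negligible, and finish with the multivariate delta method applied to the $\log$-ratio map. Throughout, fix $x$ with $f(x)>0$ and each $g^j(x)>0$, and assume the usual local regularity (twice continuous differentiability of $f$ and of $t\mapsto f(t)g^j(t)$ near $x$, together with $\lambda\to 0$ and $N\lambda^d\to\infty$) under which kernel regression is well behaved. By Lemma~\ref{lem:or_selection_bias} the estimand, and hence the asymptotics, are identical whether the sample is the clean data $\OCal_2$ or the selection-biased data $\OCal_1$; we argue for an i.i.d.\ sample of size $N=n_2$.

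\emph{Reduction and the joint CLT.} Write $\widehat g^j(x)$ for the kernel estimator $\widehat{\E}[W_j\mid x]$ of $g^j(x)=\E[W_j\mid x]$, where $(W_1,W_2,W_3,W_4)=\big(YZ,\,(1-Y)(1-Z),\,Y(1-Z),\,Z(1-Y)\big)=\vw^\top$. Then $\widehat{\OR}(x)=\widehat g^1(x)\widehat g^2(x)\big/\big(\widehat g^3(x)\widehat g^4(x)\big)$ (the kernel density normalizations cancel between numerator and denominator), so $\log\widehat{\OR}(x)=h\big(\widehat g(x)\big)$ and $\log\OR(x)=h\big(g(x)\big)$ for the map $h(u_1,\dots,u_4)=\log u_1+\log u_2-\log u_3-\log u_4$, which is $C^1$ in a neighborhood of $g(x)$ since all coordinates are positive. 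The core step is then the joint CLT
\begin{equation*}
(N\lambda^d)^{1/2}\big(\widehat g(x)-g(x)-b(x)\lambda^2\big)\ \Rightarrow\ \N\!\Big(0,\ \cov(\vw\mid x)\,f(x)^{-1}\!\int K^2(u)\,du\Big),
\end{equation*}
for the usual leading bias vector $b(x)$. I would prove this via the Cram\'er--Wold device: for any $a\in\br^4$, $a^\top\widehat g(x)$ is itself a Nadaraya--Watson estimator with the bounded scalar response $a^\top\vw$, so it suffices to invoke the univariate kernel-regression CLT. After the change of variables $u=(x-X_i)/\lambda$, that CLT reduces to (i) the moment expansions $\lambda^{-d}\E[K_1\,a^\top\vw_1]=f(x)\,a^\top g(x)+O(\lambda^2)$ and $\lambda^{-d}\var(K_1\,a^\top\vw_1)=f(x)\,\E[(a^\top\vw)^2\mid x]\!\int K^2+\littleO(1)$, (ii) a Slutsky step dividing the numerator by the kernel density estimate $\widehat f(x)\xrightarrow{p}f(x)$, which is the source of the $f(x)^{-1}$ factor, and (iii) a Lyapunov condition for the triangular array of $\lambda$-scaled summands, valid since kernel and response are bounded and $N\lambda^d\to\infty$. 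Assembling the four coordinates gives the asymptotic covariance $\cov(\vw\mid x)f(x)^{-1}\!\int K^2=\Sigma(x)f(x)^{-1}\!\int K^2$; the entries of $\Sigma(x)$ simplify because $W_1,\dots,W_4$ are mutually exclusive indicators summing to one.

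\emph{Bias and delta method.} The condition $\lambda=\littleO\big(N^{-1/(d+4)}\big)$ removes the bias from the limit: $(N\lambda^d)^{1/2}\lambda^2=(N\lambda^{d+4})^{1/2}=\littleO(1)$, so the $b(x)\lambda^2$ term above is asymptotically negligible. Applying the delta method to $h$ at $g(x)$, whose gradient $\nabla h\big(g(x)\big)$ has $j$-th coordinate $\pm 1/g^j(x)$ --- that is, $\nabla h(g(x))=A(x)$ up to the signs induced by the $\log$ --- yields
\begin{equation*}
(n_2\lambda^d)^{1/2}\big(\log\widehat{\OR}(x)-\log\OR(x)\big)\ \Rightarrow\ \N\!\Big(0,\ \nabla h(g(x))^\top\Sigma(x)\,\nabla h(g(x))\,f(x)^{-1}\!\int K^2(u)\,du\Big),
\end{equation*}
and, expanding the quadratic form using the structure of $\vw$, the variance is of the stated form $A^\top(x)\Sigma(x)A(x)\,f(x)^{-1}\!\int K^2(u)\,du$ (with $A(x)$ understood to carry the signs $+,+,-,-$ of the gradient of $h$). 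Substituting $N=n_2$ gives the theorem.

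\emph{Main obstacle.} The technical heart is the joint CLT of the reduction step: securing the triangular-array convergence with the exact $(N\lambda^d)^{1/2}$ normalization and the precise $\int K^2$ and $f(x)^{-1}$ constants --- including correctly accounting for the kernel-density denominator through the Slutsky step --- and pinning down the minimal smoothness and positivity hypotheses (not all spelled out in the statement) under which the leading bias is $O(\lambda^2)$. Once that CLT is in place, the bias bound and the delta-method computation are routine.
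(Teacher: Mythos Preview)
Your approach is essentially the same as the paper's: establish the joint asymptotic normality of the four Nadaraya--Watson components $\widehat g^j(x)$ (the paper does this coordinate-by-coordinate via the ratio $\widehat\tau^j/\widehat f$ and Lyapunov CLT, you do it via Cram\'er--Wold, but the computations are identical), use the bandwidth condition to kill the $O(\lambda^2)$ bias, and then apply the delta method to $h(u)=\log u_1+\log u_2-\log u_3-\log u_4$. Your observation about the signs in the gradient of $h$ versus the all-positive $A(x)$ in the theorem statement is well taken; the paper's own Taylor expansion carries the $(+,+,-,-)$ signs but the final quadratic form is written with the unsigned $A(x)$, so you have correctly identified a minor imprecision in how the asymptotic variance is stated.
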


\begin{proof}
By definition, 
\begin{align}
    \OR(x) = \frac{g^1(x)g^2(x)}{g^3(x)g^4(x)}.
\end{align}
Also by definition, the kernel estimator is
\begin{align}
    \widehat \OR(x) = \frac{\hat g^1(x)\hat g^2(x)}{\hat g^3(x)\hat g^4(x)},
\end{align}
where $\hat g^j(x) = \frac{\sum_{i=1}^N K(\frac{x-X_i}{\lambda}) W^j}{\sum_{i=1}^N K(\frac{x-X_i}{\lambda})}, i = 1\cdots 4$, $N$ as the number of the samples, and $(W^1, W^2, W^3, W^4)^\top = \vw = (YZ, (1-Y)(1-Z), Y(1-Z), Z(1-Y))^\top$. Then, $\log  \widehat \OR(x)$ is simply $\log(\hat g^1(x)) + \log(\hat g^2(x)) - \log(\hat g^3(x))- \log(\hat g^4(x))$.

We first study an asymptotic analysis for $g^j(x), j = 1\cdots 4$. Then we apply the Delta method to obtain the asymptotic consistency of $\log  \widehat \OR(x)$. Denote that 
\begin{align*}
    \hat g^j(x) = \frac{\frac{1}{N\lambda^d}\sum_{i=1}^N K(\frac{x-X_i}{\lambda}) W_i^j}{\frac{1}{N\lambda^d}\sum_{i=1}^N K(\frac{x-X_i}{\lambda})} = \frac{\hat \tau^j(x)}{\hat f(x)}.
\end{align*}
Then we have
\begin{align*}
   \E[\hat f(x)] &= \E\left[\frac{1}{N \lambda^d}\sum_{i=1}^N K(\frac{x-X_i}{\lambda}) \right]\\
    &= \E\left[\frac{1}{\lambda^d}K(\frac{x-X_i}{\lambda}) \right]\\
    &= \int \frac{1}{\lambda^d}K(\frac{x-z}{\lambda}) f(z)dz.
\end{align*}

Making the change-of-variables formula for multivariate densities $u = \frac{x-z}{\lambda}$, $du = \lambda^{-d} dz$, then
\begin{align*}
   \E[\hat f(x)]= \int K(u)f(x-\lambda u)du.
\end{align*}

When $f(x)$ is continuous, bounded above and $\int K(u)du =1$, the above converges to $f(x)$ as $\lambda$ goes to $0$ by dominated convergence theorem. To compute the bias, take the second order Taylor expansion of $f(x-\lambda u)$, we have:
\begin{align*}
    &f(x-\lambda u)=f(x) - \lambda \frac{\partial f(x)}{\partial x'}u + \frac{\lambda^2}{2}tr(\frac{\partial^2 f(x)}{\partial x \partial x'}uu') + \littleO(\lambda^2)
\end{align*}

Therefore, since the kernel is symmetric, the bias is $\bigO(\lambda^2)$.

Similarly, we calculate the variance of $\hat f(x)$:
\begin{align*}
    \var(\hat f(x)) &= \var\left(\frac{1}{N}\sum_{i=1}^N \frac{1}{\lambda^d}K(\frac{x-X_i}{\lambda})\right)\\
    &= \frac{1}{N} \var\left(\frac{1}{\lambda^d}K(\frac{x-X_i}{\lambda})\right)\\
    &=\frac{1}{N}\E\left[\left(\frac{1}{\lambda^d}K(\frac{x-X_i}{\lambda})\right)^2\right]-\frac{1}{N}\left(\E\left[\frac{1}{\lambda^d}K(\frac{x-X_i}{\lambda})\right]\right)^2\\
    &=\frac{1}{N}\int \frac{1}{\lambda^{2d}}\left(K(\frac{x-z}{\lambda})\right)^2f(z)dz - \frac{1}{N}\left(\E[\hat f(x)]\right)^2\\
    &=\frac{1}{N\lambda^d}\int \left(K(u)\right)^2 f(x-\lambda u)du - \frac{1}{N}\left(\E[\hat f(x)]\right)^2\\
    &=\frac{f(x)}{N\lambda^d} \int (K(u))^2du + \littleO(\frac{1}{N\lambda^d}),
\end{align*}
where we make the change of variable $u = \frac{x-z}{\lambda}$ again. Therefore, the variance of $\hat f(x)$ is $\bigO(\frac{1}{N\lambda^d})$. Recall its bias is $\bigO(\lambda^2)$ given that the kernel is symmetric, and the optimal bandwidth equates the rate of convergence of the squared bias and variance, i.e. $\bigO((\lambda^\ast)^4) = \bigO(\frac{1}{N(\lambda^\ast)^d})$. Therefore, the optimal bandwidth is $\lambda^\ast = \bigO(\frac{1}{N})^{\frac{1}{d+4}}$.

Further, we can consider $\hat f(x)$ as an average of a triangular array: 
\begin{align*}
    \hat f(x) = \frac{1}{N}\sum_{i=1}^N Z_{in}, 
\end{align*}
with $Z_{in} = \frac{1}{\lambda^d}K(\frac{x-X_i}{\lambda})$. By the Lyapunov CLT theorem, when $\lambda \rightarrow 0$, $\N\lambda^d \rightarrow \infty$, we have
\begin{align*}
    \frac{\hat f(x) - \E[\hat f(x)]}{\sqrt{\var(\hat f(x))}} \rightarrow N(0,1),
\end{align*}
as $N \rightarrow \infty$.

Notice that 
\begin{align*}
    \frac{\hat f(x) - f(x)}{\sqrt{\var(\hat f(x))}} = \frac{\hat f(x) - \E[\hat f(x)]}{\sqrt{\var(\hat f(x))}}  + \frac{\E[\hat f(x)] - f(x)}{\sqrt{\var(\hat f(x))}}.
\end{align*}

Pick $\lambda = \littleO(\frac{1}{N})^{\frac{1}{d+4}}$ ensures that and the bias term vanishes from the asymptotic distribution and is negligible relative to
the variance. Therefore, we have
\begin{align*}
    \frac{\hat f(x) - f(x)}{\sqrt{\var(\hat f(x))}} \rightarrow N(0,1),
\end{align*}
in distribution as $N \rightarrow \infty$.

Next we analyze $\hat \tau^j(x), j = 1\cdots 4$. By definition, 
\begin{align*}
    \E[\hat \tau^j (x)] &= \E[\frac{1}{\lambda^d}K(\frac{x-X_i}{\lambda})W_i^j]\\
    &= \E[\frac{1}{\lambda^d}K(\frac{x-X_i}{\lambda})g^j(x_i)]\\
    &= \int \frac{1}{\lambda^d}K(\frac{x-X_i}{\lambda})g^j(z)f(z) dz
\end{align*}
Similar to the derivation of $\E[\hat f(x)]$, we have 
\begin{align*}
    \E[\hat \tau^j(x)] &= g^j(x)f(x) + \bigO(\lambda^d)\\
    &\rightarrow g^j(x)f(x),
\end{align*}
as $N \rightarrow \infty$, and 
\begin{align*}
     \var(\hat  \tau^j(x)) &= \var\left(\frac{1}{N}\sum_{i=1}^N \frac{1}{\lambda^d}K(\frac{x-X_i}{\lambda}) W^j_i\right)\\
     &= \frac{\E[(W^j)^2|x]}{N\lambda^d} \int K^2(u)du + \littleO(\frac{1}{N\lambda^d}).
\end{align*}

As $N\lambda^d \rightarrow \infty$, $\begin{pmatrix}\hat \tau(x) \\ \hat f(x)\end{pmatrix}$ are jointly normal. Applying the Delta method, we have
\begin{align*}
    (N\lambda^d)^{\frac{1}{2}}(\hat g^j(x) - g^j(x)) \rightarrow \mathcal{N}\left(0, \frac{\var(W^j_i|x_i=x)}{f(x)}\int K^2(u)du\right).
\end{align*}

Therefore, for $j = 1\cdots 4$, we have $\var(\hat g^j(x)) = \frac{\var(W^j_i|x_i=x)}{f(x)}\int K^2(u)du, $. Similarly, we have that $\cov(\hat g^j(x), \hat g^k(x)) = \frac{\cov(W^j_i, W^k_i|x_i=x)}{f(x)}\int K^2(u)du$ for all $j,k = 1\cdots 4, j\neq k$ as $N$ goes to $\infty$. Thus,
\begin{equation}\label{eq:asymp-g}
	(n_{2}\lambda^d)^{1/2}\left(\hat g(x)-g(x)\right)\rightarrow\N\left(0,\Sigma(x)f(x)^{
	-1}\int K^2(u)du\right),
\end{equation}

Lastly, we use the Delta method again to analyze the asymptotic convergence of $\log \widehat \OR(x)$. Recall that $\log \widehat \OR(x)=\log(\hat g^1(x)) + \log(\hat g^2(x)) - \log(\hat g^3(x))- \log(\hat g^4(x))$. By definition and Taylor expansion, 
\begin{align*}
    &\log\widehat \OR(x)- \log \OR(x) \\
    &= \frac{1}{g^1(x)}(\hat g^1(x) - g^1(x)) +  \frac{1}{g^2(x)}(\hat g^2(x) - g^2(x))\\
    &- \frac{1}{g^3(x)}(\hat g^3(x) - g^3(x))- \frac{1}{g^4(x)}(\hat g^4(x) - g^4(x))
\end{align*}

Therefore, by Delta method and Eq~\eqref{eq:asymp-g}, the asymptotic variance of $\log\widehat \OR(x)$ is:
\begin{align*}
    A^\top(x)\Sigma(x)A(x) f(x)^{-1}\int K^2(u)du.
\end{align*}
Therefore, we have
\begin{align*}
   (n_{2}\lambda^d)^{1/2} (\log\widehat \OR(x)- \log \OR(x)) \rightarrow \mathcal{N}\left(0,   A^\top(x)\Sigma(x)A(x) f(x)^{-1}\int K^2(u)du\right),
\end{align*}
in distribution. This completes the proof.
\end{proof}

Let $ \log\widehat \OR_1(x),  \log\widehat \OR_2(x)$ denotes the two consistent estimators obtained from datasets $\OCal_1$ and $\OCal_2$. Let $\widehat{\tau}_{2}$ denote a consistent estimator of the true ATE $\tau$ that we obtain using dataset $\OCal_2$. Then
\begin{align}\label{eq:asymp-kernel}
\begin{split}
	\left(\begin{array}{c}
	n_{2}^{1/2}(\widehat{\tau}_{2}-\tau)\\
	(n_{2}\lambda^d)^{1/2}\left(\log\widehat \OR_1(x)-  \log\widehat \OR_2(x)\right)
	\end{array}\right) 
	\rightarrow\N\left\{ 0,\left(\begin{array}{cc}
		v_{2} & \Gamma^{\T}\\
		\Gamma & V
	\end{array}\right)\right\},
\end{split}
\end{align}

for some $V$ and $\Gamma$. If Eq.~(\ref{eq:asymp-kernel}) holds exactly rather than asymptotically, by multivariate normal theory, we have the following the conditional
distribution:
\begin{align*}
    &n_{2}^{1/2}(\widehat{\tau}_{2}-\tau)\mid 	(n_{2}\lambda^d)^{1/2}\left(\log\widehat \OR_1(x)-  \log\widehat \OR_2(x)\right)\\
    &\sim \mathcal{N}\left\{(n_{2}\lambda^d)^{1/2}\Gamma^{\T}V^{-1}	(n_{2}\lambda^d)^{1/2}\left(\log\widehat \OR_1(x)-  \log\widehat \OR_2(x)\right),v_{2}-\Gamma^{\T}V^{-1}\Gamma\right\}.
\end{align*}

Then, we apply the control variates method to build a new estimator of $\tau$ which has a lower variance than $\hat \tau_2$. The new bias-corrected estimator for ATE is as follows: $\widehat{\tau}_{\CV}(\beta) = \widehat{\tau}_{2}-\beta \left(\log\widehat \OR_1(x)-  \log\widehat \OR_2(x)\right)$.

Solving for the optimal $\beta$, we obtain the new estimator 
\begin{equation}\label{eq:proposed-estimator-kernel}
	\widehat{\tau}_{\CV}=\widehat{\tau}_{2}-\sqrt{\lambda^d}\Gamma^{\T}V^{-1}\left(\log\widehat \OR_1(x)-  \log\widehat \OR_2(x)\right),
\end{equation}

where $V = \text{Var}\left(\log\widehat \OR_1(x)-  \log\widehat \OR_2(x)\right)^{-1}$, and $\Gamma = \text{Cov}(\log\widehat \OR_1(x)-  \log\widehat \OR_2(x), \widehat{\tau}_2)$, and $\lambda^\ast = \littleO(\frac{1}{N})^{\frac{1}{d+4}}$.

Denote the asymptotic variance of $\widehat{\tau}_{2}$ as $v_2$. Under Assumption \ref{assump-ignorable}, if Equation (\ref{eq:asymp-kernel}) holds, then $\widehat{\tau}_{\CV}$ is consistent for $\tau$, and we have:
\begin{equation*}\label{eq:asymp-var-kernel}
	n_{2}^{1/2}(\widehat{\tau}_{\CV}-\tau)\rightarrow\N(0,v_{2}-\Gamma^{\T}V^{-1}\Gamma),
\end{equation*}
in distribution as $n_{2}\rightarrow\infty$. Given a nonzero $\Gamma$, the asymptotic variance, $v_{2}-\Gamma^{\T}V^{-1}\Gamma,$ is smaller than $v_2$.

\section{Bootstrap Sampling Procedure}\label{app:bootstrap}

Our bootstrap sampling procedure is similar to the one in \citet{yang2020combining}. For $b = 1,...,B$, we construct bootstrap replicates for the estimators as follows:

\textbf{Step 1.} Sample $n_2$ units from  $\OCal_2$ with replacement as $O_2^{*(b)}$, and sample $n_1$ units from  $\OCal_1$ with replacement as $O_1^{*(b)}$.

\textbf{Step 2.} Compute the bootstrap replicate $\hat{\tau}_2^{(b)}$ using the dataset $O_2^{*(b)}$, and compute the bootstrap replicates $\hat{\psi}_{2}^{(b)}$, and $\hat{\psi}_{1}^{(b)}$ using the dataset  $O_1^{*(b)}$.

Based on the bootstrap replicates, we estimate the sample covariance $\hat{\Gamma}$ and $\hat{V}$ by
\begin{align*}
    \hat{\Gamma} &= \frac{1}{B - 1} \sum_{b = 1}^B (\hat{\tau}_2^{(b)} - \hat{\tau}_2)(\hat{\psi}_{2}^{(b)} - \hat{\psi}_{1}^{(b)} - \hat{\psi}_{2} + \hat{\psi}_{1}), \\
    \hat{V} &= \frac{1}{B - 1} \sum_{b = 1}^B( \hat{\psi}_{2}^{(b)} - \hat{\psi}_{1}^{(b)} 
    - \hat{\psi}_{2} + \hat{\psi}_{1})
    ( \hat{\psi}_{2}^{(b)} - \hat{\psi}_{1}^{(b)} 
    - \hat{\psi}_{2} + \hat{\psi}_{1})^{\top}.
\end{align*}

The bootstrap covariance estimates $\hat{\Gamma}$ and $\hat{V}$ are consistent if the estimators $\hat{\tau}_2$, $\hat{\psi}_{1}$, and $\hat{\psi}_{2}$ are regular asymptotically linear (RAL) estimators, as shown by \citet{efron1986bootstrap} and \citet{shao2012bootstrap}.

\begin{definition}
An estimator $\hat{\tau}$ for a statistic $\tau$ estimated from a dataset $\{Z_i,X_i,Y_i\}_{i=1}^n$ is RAL if it can be asymptotically approximated by a sum of IID random vectors with mean 0: 
\begin{equation*}
    \hat{\tau} - \tau \cong \frac{1}{n} \sum_{i=1}^n \phi(Z_i, X_i, Y_i)
\end{equation*}
$\phi(Z,X,Y)$ is also known as the influence function for $\hat{\tau}$.
\end{definition}

A common example of a RAL estimator for the ATE is the regression imputation estimator, which we used in experiments.

\subsection{Matching estimators}

Another common class of ATE estimators is matching estimators. Matching estimators do not have smooth influence functions, so the direct bootstrap procedure above may not be consistent \cite{abadie2008bootstrapfailure}. However, \citet{yang2020combining} and \citet{abadie2006largesample} show that the bias of a matching estimator $\hat{\tau}$ can still be expressed in an asymptotically linear form:
\begin{equation*}
    \hat{\tau} - \tau \cong \frac{1}{n} \sum_{i=1}^n \phi_i
\end{equation*}

Using these linear terms, a slightly modified bootstrap procedure can be used, which \citet{yang2020combining} show to be consistent for both RAL estimators and matching estimators. This procedure uses a modified version of Step 2 which estimates the asymptotically linear terms. Let $\phi_i^{\tau_2}$ indicate the asymptotically linear term for estimator $\hat{\tau}_2$ (e.g. $\phi(Z_i, X_i, Y_i)$ for RAL $\hat{\tau}_2$), and let $\phi_i^{\psi_1}$, $\phi_i^{\psi_2}$ indicate the same for estimators $\hat{\psi}_2$, $\hat{\psi}_2$, respectively. Let $\hat{\phi}_i^{\tau_2}$, $\hat{\phi}_i^{\psi_1}$, $\hat{\phi}_i^{\psi_2}$  denote estimates for the population quantities.

\textbf{Step 2 (modified for matching).} Compute the bootstrap replicates using the dataset $O_2^{*(b)}$ as
\begin{equation*}
    \hat{\tau}_2^{(b)} - \hat{\tau}_2 = \frac{1}{n_2} \sum_{i=1}^{n_2} \hat{\phi}_i^{\tau_2}
\end{equation*}
and compute the bootstrap replicates using the dataset  $O_1^{*(b)}$ as
\begin{equation*}
    \hat{\psi}_{1}^{(b)} - \hat{\psi}_1 = \frac{1}{n_1} \sum_{i=1}^{n_1} \hat{\phi}_i^{\psi_1};\quad \hat{\psi}_{2}^{(b)} - \hat{\psi}_2 = \frac{1}{n_1} \sum_{i=1}^{n_1} \hat{\phi}_i^{\psi_2}.
\end{equation*}

\begin{theorem}\label{thm:bootstrap} (Theorem 3 from \citet{yang2020combining})
If $\hat{\tau}_2$, $\hat{\psi}_{1}$, and $\hat{\psi}_{2}$ are RAL estimators or matching estimators, then under certain regularity conditions, the bootstrap estimates $\hat{\Gamma}, \hat{V}$ under this modified procedure are consistent for $\Gamma, V$.
\end{theorem}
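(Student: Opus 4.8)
The plan is to obtain this as a direct consequence of two classical bootstrap‑consistency results, applied separately within each data source and then recombined using the independence of $\OCal_1$ and $\OCal_2$. Write the stacked estimator $T=(\hat\tau_2,\hat\psi_1,\hat\psi_2)^{\T}$ and note that, in the construction of the control variate, $\hat\tau_2$ and $\hat\psi_2$ are functionals of $\OCal_2$ while $\hat\psi_1$ is a functional of $\OCal_1$, with $\OCal_1$ independent of $\OCal_2$. Hence the target quantities $V=\var(\log\widehat\OR_1(x)-\log\widehat\OR_2(x))$ and $\Gamma=\cov(\log\widehat\OR_1(x)-\log\widehat\OR_2(x),\hat\tau_2)$ inherit a block structure: the cross‑covariance between any $\OCal_1$‑statistic and any $\OCal_2$‑statistic is zero, so $V$ and $\Gamma$ are assembled only from the within‑$\OCal_1$ second moments of $\hat\psi_1$ and the within‑$\OCal_2$ second moments of $(\hat\tau_2,\hat\psi_2)$. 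It therefore suffices to prove that the modified bootstrap reproduces each within‑source (co)variance, and then to invoke independence of the two resampling streams together with the continuous mapping theorem to conclude $\hat V\to_{p}V$ and $\hat\Gamma\to_{p}\Gamma$.

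For the RAL case this is standard. If $\hat\theta-\theta\cong n^{-1}\sum_i\phi(Z_i,X_i,Y_i)$ with $\E\phi=0$ and $\E\|\phi\|^2<\infty$, then $n^{1/2}(\hat\theta-\theta)\to\N(0,\E[\phi\phi^{\T}])$, the nonparametric bootstrap replicate obeys $n^{1/2}(\hat\theta^{*}-\hat\theta)\to\N(0,\E[\phi\phi^{\T}])$ conditionally on the data, and the empirical covariance of the recentered replicates converges to $\E[\phi\phi^{\T}]$; this is exactly the content invoked via \citet{efron1986bootstrap} and \citet{shao2012bootstrap}. Applying it to $\hat\psi_1$ on $\OCal_1$, and jointly to the vector $(\hat\tau_2,\hat\psi_2)$ on $\OCal_2$, whose stacked influence function is square‑integrable under the stated conditions, yields consistency of the corresponding within‑source bootstrap (co)variances.

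The matching case is the part that needs the modified Step~2, since a matching estimator is not a smooth functional of the empirical distribution and, as \citet{abadie2008bootstrapfailure} show, the ordinary bootstrap is inconsistent for its variance. Following \citet{abadie2006largesample}, one uses the asymptotically linear expansion $\hat\tau-\tau\cong n^{-1}\sum_i\phi_i$, in which $\phi_i$ now also encodes the random number of times unit $i$ is used as a match, and resamples the \emph{estimated} terms $\hat\phi_i$ rather than the raw observations --- precisely what Step~2 (modified for matching) does. The argument then has two pieces: (i) $n^{-1}\sum_i(\hat\phi_i-\phi_i)=o_{P}(n^{-1/2})$ and $n^{-1}\sum_i\hat\phi_i\hat\phi_i^{\T}\to_{p}\E[\phi_i\phi_i^{\T}]$, which follows from consistency of the nearest‑neighbor estimates of the conditional outcome mean (and propensity) under overlap and smoothness; and (ii) a conditional Lindeberg CLT for the resampled average $n^{-1}\sum_i\hat\phi_i^{(b)}$, giving the same limiting covariance. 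Together these give bootstrap consistency for whichever of $\hat\tau_2,\hat\psi_1,\hat\psi_2$ happens to be a matching estimator.

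Putting the blocks together, independence of the resampling of $\OCal_1$ and $\OCal_2$ makes the joint bootstrap law of $T^{*}$ factorize, so its limiting covariance is block‑diagonal with exactly the blocks identified above; since $\hat\Gamma$ and $\hat V$ are empirical (co)variances of affine functions of $T^{*}$, the continuous mapping theorem gives $\hat\Gamma\to_{p}\Gamma$ and $\hat V\to_{p}V$. \textbf{The main obstacle is the matching case, specifically part (i)}: verifying, under the unspecified ``regularity conditions'' (Lipschitz smoothness of $x\mapsto\E[Y\mid Z=z,X=x]$, strict overlap, finite moments, and control of the matching discrepancy $\|X_i-X_{j(i)}\|$), that the plug‑in linear terms $\hat\phi_i$ track the population $\phi_i$ closely enough --- in an averaged, second‑moment sense rather than pointwise --- that resampling them recovers the correct asymptotic variance instead of the inflated one produced by the naive bootstrap. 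Since the statement is exactly Theorem~3 of \citet{yang2020combining}, in the write‑up I would either cite it directly or reproduce that argument specialized to the choice $\psi=\log\OR$.
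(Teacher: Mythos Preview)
The paper does not prove this theorem at all; it is stated solely as a citation of Theorem~3 from \citet{yang2020combining}, with no argument given. Your closing suggestion --- to cite it directly --- is therefore exactly what the paper does.

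Your sketch goes well beyond the paper by outlining how such a proof would actually proceed: block decomposition via independence of $\OCal_1$ and $\OCal_2$, standard bootstrap consistency for RAL estimators, and the influence-function resampling fix for matching estimators following \citet{abadie2006largesample}. This is a reasonable roadmap and correctly isolates the delicate step (your part~(i), controlling $\hat\phi_i-\phi_i$ in a second-moment sense). If you choose to reproduce the argument rather than cite, be aware that the ``certain regularity conditions'' are doing real work in the matching case and you would need to import the specific assumptions from \citet{yang2020combining} to make the outline rigorous; the paper under review deliberately avoids this by deferring entirely to that reference.
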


\section{Proofs and Further Analysis of the Odds Ratio} \label{app:proofs}

We provide proofs for the theorems and lemma presented in Sections \ref{sec:methods} and \ref{sec:cv_selection_bias}, as well as further analysis of the odds ratio's effectiveness as a control variate.

\subsection{Proofs from Section \ref{sec:methods}}

\begin{reptheorem}{thm:general-cv-var} Denote the asymptotic variance of $\widehat{\tau}_{2}$ as $v_2$. Under Assumption \ref{assump-ignorable}, if Equation (\ref{eq:asymp}) holds, then $\widehat{\tau}_{\CV}$ is consistent for $\tau$, and we have:
\begin{equation*}
	n_{2}^{1/2}(\widehat{\tau}_{\CV}-\tau)\rightarrow\N(0,v_{2}-\Gamma^{\T}V^{-1}\Gamma),
\end{equation*}
in distribution as $n_{2}\rightarrow\infty$. Given a nonzero $\Gamma$, the asymptotic variance, $v_{2}-\Gamma^{\T}V^{-1}\Gamma,$ is smaller than $v_2$.
\end{reptheorem}

\begin{proof}
The theorem statement follows directly from \begin{align*}
\begin{split}
	n_{2}^{1/2}\left(\begin{array}{c}
		\widehat{\tau}_{2}-\tau\\
		\widehat{\psi}_{2}-\widehat{\psi}_{1}
	\end{array}\right) 
	\rightarrow\N\left\{ 0,\left(\begin{array}{cc}
		v_{2} & \Gamma^{\T}\\
		\Gamma & V
	\end{array}\right)\right\}.
\end{split}
\end{align*}

By construction, $	\widehat{\tau}_{\CV}=\widehat{\tau}_{2}-\Gamma^{\T}V^{-1}(\widehat{\psi}_{2}-\widehat{\psi}_{1})$. To compute the asymptotic variance, notice that 
\begin{align*}
    &\var(n_{2}^{1/2}(\widehat{\tau}_{\CV}-\tau)) \\
    &= \var(n_{2}^{1/2}(\widehat{\tau}_{\CV}-\tau))\\
    &= \var(n_{2}^{1/2}(\widehat{\tau}_{2}-\tau-\Gamma^{\T}V^{-1}(\widehat{\psi}_{2}-\widehat{\psi}_{1}))) \\
    &= \var(n_{2}^{1/2}(\widehat{\tau}_{2}-\tau)) + \Gamma^{\T}V^{-1}\var(n_{2}^{1/2}(\widehat{\psi}_{2}-\widehat{\psi}_{1}))V^{-1}\Gamma - 2\cov(n_{2}^{1/2}(\widehat{\tau}_{2}-\tau),n_{2}^{1/2}\Gamma^{\T}V^{-1}(\widehat{\psi}_{2}-\widehat{\psi}_{1}))\\
    &= v_2 + \Gamma^{\T}V^{-1}\Gamma - 2\Gamma^{\T}V^{-1}\Gamma) \\
    &= v_2 - \Gamma^{\T}V^{-1}\Gamma)
\end{align*}
Therefore, we have:
\begin{equation*}
	n_{2}^{1/2}(\widehat{\tau}_{\CV}-\tau)\rightarrow\N(0,v_{2}-\Gamma^{\T}V^{-1}\Gamma),
\end{equation*}
in distribution as $n_{2}\rightarrow\infty$, which completes the proof.
\end{proof}

\subsection{Proofs from Section \ref{sec:cv_selection_bias}}

\begin{replemma}{lem:or_selection_bias}
If the selection $S$ depends solely on $Y$ (as in Figure \ref{fig:selection_bias_y}), then the conditional odds ratio is transportable and given by:
\begin{align*}
    \begin{split}
         \OR(x) = \frac{P(Y = 1 |S=1, Z=1,  x) P(Y = 0 | S=1,Z=0,  x)}{P(Y = 0 |S=1, Z=1,   x) P(Y = 1 |S=1, Z=0,   x)}.
    \end{split}
\end{align*}
\end{replemma}

\begin{proof}
By Bayes' theorem, 
\begin{equation*}
    P(Y = y|Z=z, x) = \frac{P(Y = y |  S = 1, Z = z, x) P(S = 1 | Z = z, x)}{P(S = 1 | Y = y, Z = z, x)}.
\end{equation*}
Since $S$ depends solely on $Y$, $S$ is conditionally independent of $X$ and $Z$ given $Y$. Therefore, we can rewrite the equation above as 
\begin{equation*}\label{eq:or_bayes}
    P(Y = y|Z=z, x) = \frac{P(Y = y |  S = 1, Z = z, x) P(S = 1 | Z = z, x)}{P(S = 1 | Y = y)}.
\end{equation*}

Substituting this into Definition~\ref{def:odds} (under Assumption~\ref{assump-ignorable}),
\begin{align*}
    \OR(x) &= \frac{P(Y = 1 | Z=1, x) P(Y = 0 | Z=0, x)}{P(Y = 0 | Z=1, x) P(Y = 1 | Z=0, x)} \\
    &= \frac{\frac{P(Y = 1 |  S = 1, Z = 1, x) P(S = 1 | Z = 1, x)}{P(S = 1 | Y = 1)} \frac{P(Y = 0 |  S = 1, Z = 0, x) P(S = 1 | Z = 0, x)}{P(S = 1 | Y = 0)}}{\frac{P(Y = 0 |  S = 1, Z = 1, x) P(S = 1 | Z = 1, x)}{P(S = 1 | Y = 0)} \frac{P(Y = 1 |  S = 1, Z = 0, x) P(S = 1 | Z = 0, x)}{P(S = 1 | Y = 1)}} \\
    &= \frac{P(Y = 1 |  S = 1, Z = 1, x) P(Y = 0 |  S = 1, Z = 0, x) }{P(Y = 0 |  S = 1, Z = 1, x)  P(Y = 1 |  S = 1, Z = 0, x) }
\end{align*}
Therefore, the conditional odds ratio is transportable under selection bias given by $S$.
\end{proof}

\begin{reptheorem}{thm:OR-robust-selection-bias}
If the selection $S$ depends solely on $Y$ (as in Figure \ref{fig:selection_bias_y}) and $P(Y=1 |  Z = z, X = x)$ follows the logistic model in \eqref{eq:varying-coeff}, then $P(Y=1 | Z = z,X = x,  S = 1)$ also follows a logistic model, with the same coefficient $\beta_1^x$ on $Z$ as the logistic model for $P(Y=1 |  Z = z, X = x)$ for each covariate value $x$. Furthermore, the conditional odds ratio $\OR(x) = e^{\beta_1^x}$.
\end{reptheorem}

\begin{proof}
Given the assumed outcome model, we have
\begin{equation*}
    P(Y=1 | Z = z, x) = \frac{e^{\beta_0^x + \beta_1^x z}}{1 + e^{\beta_0^x + \beta_1^x z}}.
\end{equation*}

Let $p_1 = P(S = 1 | Y = 1)$ and $p_0 = P(S=1 | Y = 0)$. Since the selection $S$ depends solely on $Y$, $S$ is conditionally independent of $X$ and $Z$ given $Y$.

The outcome model under selection bias is given by:
\begin{align*}
    &P(Y=1 | Z = z, X = x, S = 1) \\
    &= \frac{P(Y=1 | Z = z, X = x)P(S = 1 | Z = z, X = x, Y = 1)}{P(S = 1 | Z = z, X = x)} \\
    &= \frac{P(Y=1 | Z = z, X = x)P(S = 1 | Z = z, X = x,  Y = 1)}{\sum_{y \in \{0,1\}}P(Y=y | Z = z, X = x, )P(S = 1 | Z = z, X = x,  Y = y)} \\
    &= \frac{P(Y=1 |Z = z, X = x)p_1}{P(Y=1 | Z = z, X = x)p_1 + P(Y=0 | Z = z, X = x)p_0} \\
    &= \frac{\frac{e^{\beta_0^x + \beta_1^x z}}{1 + e^{\beta_0^x + \beta_1^x z}}p_1}{\frac{e^{\beta_0^x + \beta_1^x z}}{1 + e^{\beta_0^x + \beta_1^x z}}p_1 + \frac{1}{1 + e^{\beta_0^x + \beta_1^x z}}p_0} \\
    &= \frac{e^{\beta_0^x + \beta_1^x z}p_1}{e^{\beta_0^x + \beta_1^x z}p_1 + p_0} \\
    &= \frac{e^{\beta_0^x + \beta_1^x z}p_1/p_0}{e^{\beta_0^x + \beta_1^x z}p_1/p_0 + 1} \\
    &= \frac{e^{\delta + \beta_0^x + \beta_1^x z}}{1 + e^{\delta + \beta_0^x + \beta_1^x z}}.
\end{align*}

where $\delta = \log(p_1/p_0)$. Thus, on the selection biased dataset $\OCal_1$, the outcome model $P(Y=1 | Z = z,X = x,  S = 1)$ also follows a logistic model, with the same coefficient $\beta_1^x$ on $Z$ as the logistic model for $P(Y=1 |  Z = z, X = x)$ for each covariate value $x$. Furthermore, a simple calculation shows that the conditional odds ratio is $\OR(x) = e^{\beta_1^x}$:
\begin{align*}
     \OR(x) &= \frac{P(Y = 1 | Z=1, x) P(Y = 0 | Z=0, x)}{P(Y = 0 | Z=1, x) P(Y = 1 | Z=0, x)} \\
     &= \frac{\frac{e^{\beta_0^x + \beta_1^x}}{1 + e^{ \beta_0^x + \beta_1^x}} \frac{1}{1 + e^{\beta_0^x}}}{\frac{1}{1 + e^{\beta_0^x + \beta_1^x}} \frac{e^{\beta_0^x}}{1 + e^{ \beta_0^x}}} \\
     &= \frac{e^{\beta_0^x + \beta_1^x}}{e^{\beta_0^x}} \\
     &= e^{\beta_1^x}.
\end{align*}
\end{proof}

\subsection{Analysis of Nonlinear Relationship Between ATE and OR}

As discussed in Section \ref{sec:methods}, the variance reduction from adding control variates depends on the strength of the correlation between the control variates and the ATE estimator. Since we propose to use the odds ratio for selection biased datasets, here we examine the relationship between the ATE and the odds ratio. Specifically, we derive an explicit expression for the ATE using the marginal odds ratio $\OR$ assuming a binary covariate $X$ and the following simple logistic outcome model:

\begin{equation*}
    P(Y = 1 | Z = z, X = x) = \frac{e^{\beta_0 + \beta_1 z + \beta_2 x}}{1 + e^{\beta_0 + \beta_1 z + \beta_2 x} },
\end{equation*}

where the marginal odds ratio $\OR$ is defined as
\begin{equation*}
    \OR = \frac{P(Y(1) = 1) P(Y(0) = 0)}{P(Y(1) = 0) P(Y(0) = 1)}.
\end{equation*}

Under this simple logistic outcome model, $\OR = e^{\beta_1}$.

By Assumption~\ref{assump-ignorable}, we have
\begin{align*}
    \E[Y(1)] 
    &= \int \E[Y|Z=1, X=x] P(X=x) \text{d} x\\
    &= \int \frac{e^{\beta_0 + \beta_1 + \beta_2x}}{e^{\beta_0 + \beta_1 + \beta_2x+1}}P(X=x) \text{d} x \\
    &= \frac{e^{\beta_0 + \beta_1 + \beta_2}}{e^{\beta_0 + \beta_1 + \beta_2+1}}P(X=1) + \frac{e^{\beta_0 + \beta_1}}{e^{\beta_0 + \beta_1 +1}}(1-P(X=1))
\end{align*}
Similarly, 
\begin{align*}
    \E[Y(0)] 
    &= \int \E[Y|Z=0, X=x] P(X=x) \text{d} x\\
    &= \int \frac{e^{\beta_0  + \beta_2x}}{e^{\beta_0  + \beta_2x+1}}P(X=x) \text{d} x \\
    &= \frac{e^{\beta_0  + \beta_2}}{e^{\beta_0 + \beta_2+1}}P(X=1) + \frac{e^{\beta_0 }}{e^{\beta_0  +1}}(1-P(X=1))
\end{align*}

Therefore, by some algebra we obtain that 
\begin{align*}
   &\tau = \E[Y(1)]-\E[Y(0)]\\
   &= \frac{\gamma ab\psi}{ab\psi+1} - \frac{\gamma a\psi}{a\psi+1} + \frac{a\psi-a}{a^2\psi-a\psi+a+1} - C,
\end{align*}
where $\psi = \OR$, $\gamma = P(X=1)$, $a = e^{\beta_0}$, $b = e^{\beta_2}$, with a constant term $C = \frac{a}{a+1} - \frac{ab}{ab+1}$.

\section{Additional Experimental Details and Results for Simulation Study}\label{app:experiments_sim}

This section provides additional experimental details and results for the simulation study. All code for running the simulation study is provided with the supplementary materials.

\subsection{Data generation}\label{app:experiments_sim_data}

We generate the dataset $\OCal_2$ by sampling $n_2$ samples using the following data-generating process. Let $X \in \mathbb{R}^2$ have two components $X_1, X_2$, which are i.i.d.\ Bernoulli($p=0.5$). Given $X$, the treatment assignment $Z$ is distributed as 
$P(Z = 1 | X = x) = \frac{e^{a_0 + a_1^Tx}}{1 + e^{a_0 + a_1^Tx}}.$
    
As specific parameters, we set $a_1 = [-1, 1]$, and $a_0 = -E[a_1^T X]$, which implies that $P(Z = 1) = 0.5$. Setting $a_1 = [0, 0]$ would correspond to a randomized study, whereas we set $a_1 = [-1, 1]$ to simulate an observational study with confounding. 
The potential outcomes are distributed as 
\begin{align}
\begin{split}
    P(Y(0) = 1 |x) = \frac{e^{b_{0,0} + b_{0,1}^Tx}}{1 + e^{b_{0,0} + b_{0,1}^Tx}}, \quad P(Y(1) = 1 |x) = \frac{e^{b_{1,0} + b_{1,1}^Tx}}{1 + e^{b_{1,0} + b_{1,1}^Tx}}.
\end{split}\label{eq:sim_potential_outcomes}
\end{align}
Eq.~\eqref{eq:sim_potential_outcomes} builds in ignorability in Assumption \ref{assump-ignorable}, which is also equivalent to generating the outcome $Y$ from
\begin{equation*}
    P(Y = 1 | Z = z, x) = \frac{e^{\beta_0 + \beta_1 z + \beta_2^T x + \beta_3^T xz}}{1 + e^{\beta_0 + \beta_1 z + \beta_2^T x + \beta_3^T xz}},
\end{equation*}
where $\beta_0 = b_{0,0}$, $\beta_1 = b_{1,0} - b_{0,1}$, $\beta_2 = b_{0,1}$, and $\beta_3 = b_{1,1} - b_{0,1}$.
When $\beta_3 = 0$ and $b_{1,1} = b_{0,1}$, then there is no interaction term between $X$ and $Z$ and the conditional odds ratio is simply $e^{\beta_1}$. We set $b_{0,1} = [-1,1]$ and $b_{1,1} = [1, -1]$ ($\beta_3 \neq 0$) so that the conditional odds ratio varies as a function of $x$. 
As done by \citet{zhang2009estimatingoddsratio}, the intercept terms are determined by $b_{0,0} = -0.5 - E[b_{0,1}^T X]$ and $b_{1,0} = 0.5 - E[b_{1,1}^T X]$.

\subsection{Finite-sample experiment scenarios}\label{app:experiments_sim_scenarios}

We consider three scenarios to analyze the finite-sample performance of the proposed estimators. For all three scenarios, we compare the variance for the ATE estimator $\widehat{{\tau}}_{2}$ with the variance of the ATE estimator with control variates $\widehat{{\tau}}_{CV}$. To compare these, we ran $B=100$ bootstrap replicates to measure the variance of the ATE estimator with and without the odds ratio control variates. 

\textbf{Scenario 1:} We vary the size of the observational dataset, $n_2$, while keeping a constant ratio for the size of the observational dataset relative to the size of the selection biased dataset: $n_2/n_1 = 1/10$. This illustrates the simple asymptotic performance of the estimators as the sample sizes increase without changing the proportional sizes of the two datasets relative to each other. 

\textbf{Scenario 2:} We vary the size of the observational dataset, $n_2$, while keeping the size of the selection biased dataset constant and relatively large: $n_1 = 10000$. This illustrates the scenario when a practitioner has access to a large fixed amount of case-control data with selection bias ($\OCal_1$), and must decide how much observational or experimental data to collect with identifiable ATE ($\OCal_2$).

\textbf{Scenario 3:} We vary the size of the selection bias dataset, $n_1$, while keeping the size of the observational dataset constant and relatively small: $n_2 = 1000$. This illustrates the relative utility of including more selection biased samples to estimate the control variate. 
While the ratio $n_2/n_1 = 1/10$ is fixed in Scenario 1, in Scenario 3 we consider the effect of varying that ratio for a fixed observational dataset size, $n_2$.

\subsection{Simple logistic outcome model without interaction between \texorpdfstring{$X$}{X} and \texorpdfstring{$Z$}{Z}}\label{app:experiments_sim_simple}

In addition to the data generation setting described in Section \ref{sec:experiments_sim}, we also include results with a simpler data generation setting without interaction between $X$ and $Z$. We set $b_{1,1} = b_{0,1} = (-1,1)^\top$, which implies that $\beta_3 = 0$ in Section \ref{app:experiments_sim_data}. In this simpler model, the conditional odds ratio is constant in $X$ and is given by $e^{\beta_1}$.

Figure \ref{fig:var_reg_simple_logistic} shows that adding control variates still improves the variance of the ATE estimator under this simpler outcome model without interaction between $X$ and $Z$.

\begin{figure*}[!ht]
\centering
\begin{tabular}{ccc} 
\includegraphics[width=0.31\textwidth]{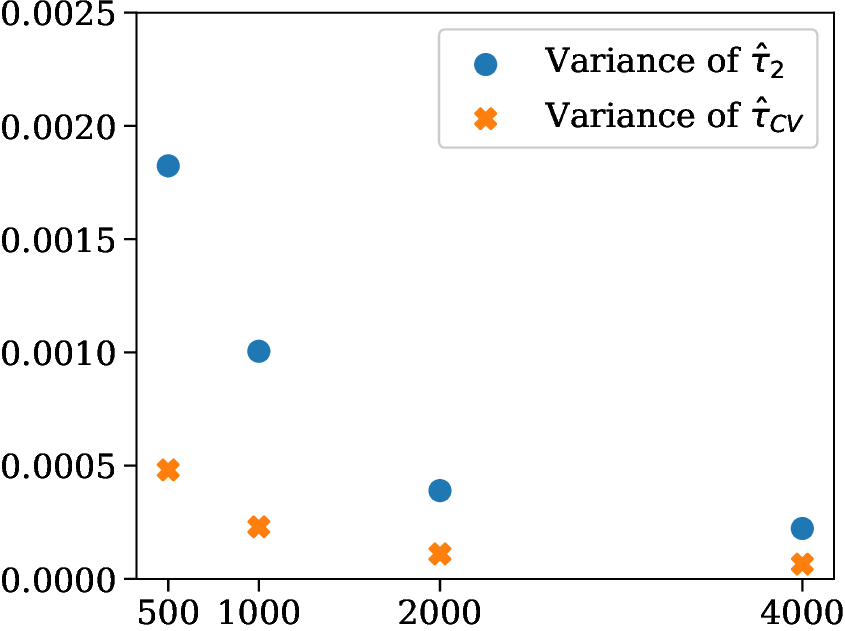} & 
\includegraphics[width=0.31\textwidth]{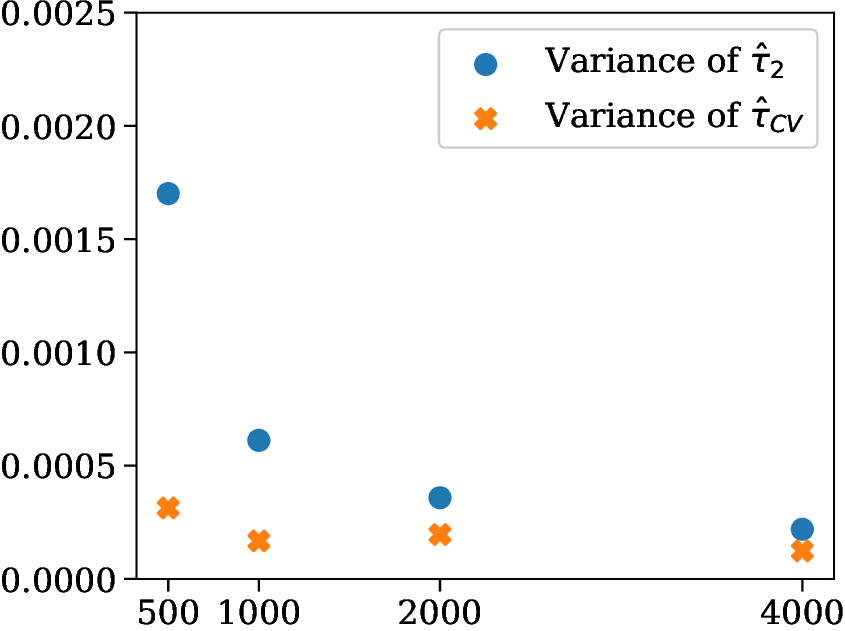} & 
\includegraphics[width=0.31\textwidth]{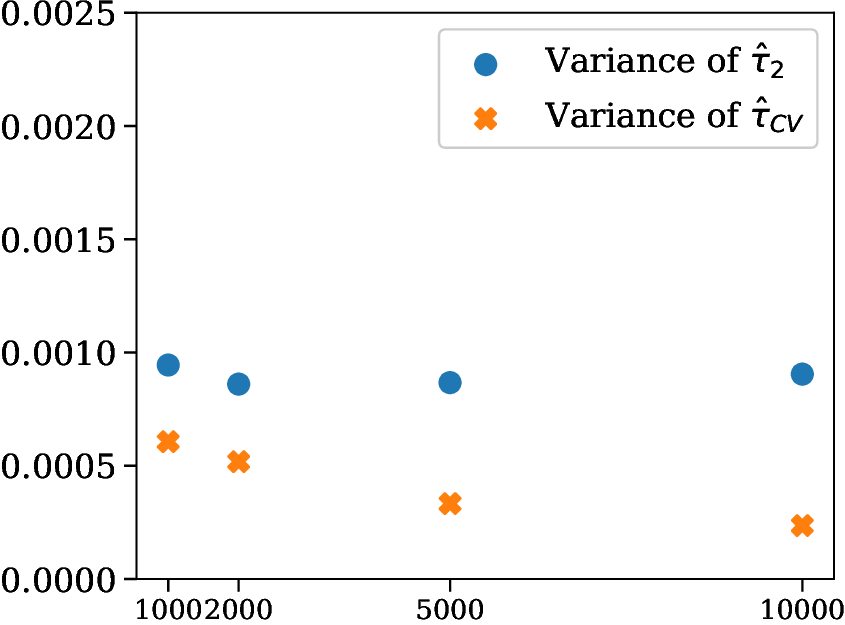} \\
\scriptsize{$n_2$} & \scriptsize{$n_2$} & \scriptsize{$n_2$} \\
\end{tabular}
\caption{\footnotesize{\textbf{Scenario 1 (left):} Comparisons of variance as the size of the observational dataset $n_2$ increases under the simple logistic model with $\beta_3 = 0$. The ratio $n_2/n_1$ is kept constant at $1/10$. \textbf{Scenario 2 (middle):} Comparisons of variance as the size of the observational dataset $n_2$ increases under the simple logistic model with $\beta_3 = 0$. The size of the selection bias dataset is fixed at $n_1=10000$. \textbf{Scenario 3 (right):} Comparisons of variance as the size of the selection bias dataset $n_1$ increases under the simple logistic model with $\beta_3 = 0$. The size of the observational dataset is fixed at $n_2=1000$. \textit{Lower is better.}}}\label{fig:var_reg_simple_logistic}  
\end{figure*}

\subsection{Bias results}\label{app:experiments_sim_bias}

In this section, we report the bias for the ATE estimators with and without control variates,i.e. $\widehat{\tau}_2$ and $\widehat{\tau}_{CV}$. The bias is calculated over $B=100$ bootstrap replicates, and is defined as the difference between the average value of the ATE estimator over $B=100$ bootstrap replicates and the true ATE. 

The bias for the simple logistic outcome model without interaction between $X$ and $Z$ described in Appendix \ref{app:experiments_sim_simple} is given in Figures \ref{fig:bias_reg_ratio-fixed_simple},  \ref{fig:bias_reg_n1-fixed_simple}, and \ref{fig:bias_reg_n2-fixed_simple}. The bias for the logistic model with varying coefficients described in Section \ref{sec:experiments_sim} and Appendix \ref{app:experiments_sim_data} is given in Figures \ref{fig:bias_reg_ratio-fixed_varying-coef},  \ref{fig:bias_reg_n1-fixed_varying-coef}, and \ref{fig:bias_reg_n2-fixed_varying-coef}. In general, across all three scenarios, the bias decreases as $n_2$ increases, and is not significantly different with or without control variates. We report the bias mainly to confirm that adding control variates does not significantly increase the bias of the estimator in finite samples. 

\begin{figure}[!ht]
    \centering
    \begin{tabular}{c}
    Simple logistic model ($\beta_3 = 0$) \\
    \includegraphics[width=0.4\textwidth]{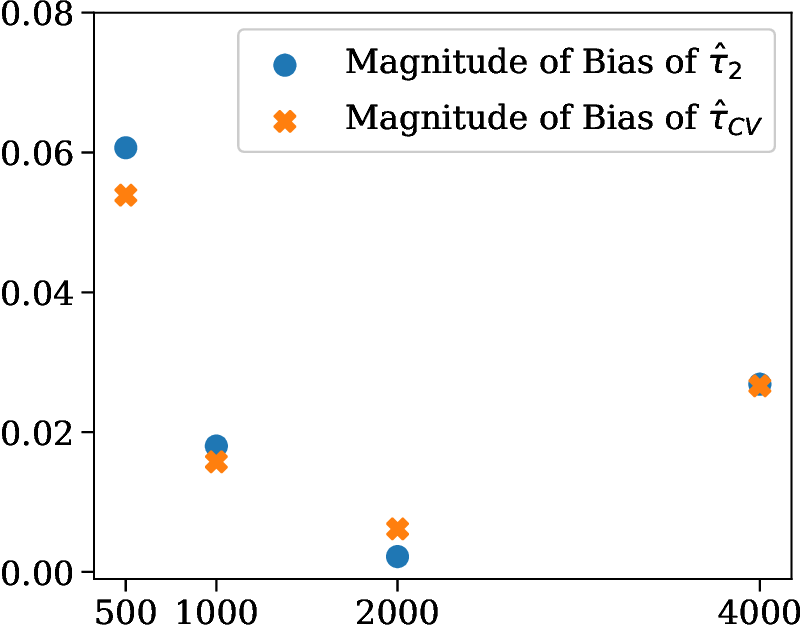} \\
    $n_2$ 
    \end{tabular}
    \caption{\textbf{Scenario 1:} Comparisons of bias as the size of the observational dataset $n_2$ increases under the simple logistic model. The ratio $n_2/n_1$ is kept constant at $1/10$.}
    \label{fig:bias_reg_ratio-fixed_simple}
\end{figure}

\begin{figure}[!ht]
    \centering
    \begin{tabular}{c}
    Simple logistic model ($\beta_3 = 0$) \\
    \includegraphics[width=0.4\textwidth]{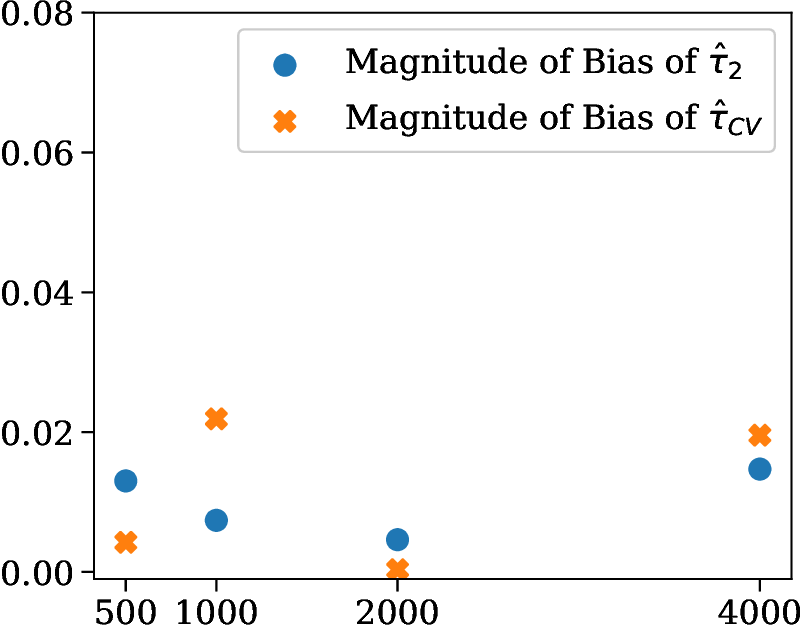}  \\
    $n_2$ 
    \end{tabular}
    \caption{\textbf{Scenario 2:} Comparisons of bias as the size of the observational dataset $n_2$ increases under the simple logistic model. The size of the selection bias dataset is fixed at $n_1=10000$.}
    \label{fig:bias_reg_n1-fixed_simple}
\end{figure}

\begin{figure}[!ht]
    \centering
    \begin{tabular}{c}
    Simple logistic model ($\beta_3 = 0$) \\
    \includegraphics[width=0.4\textwidth]{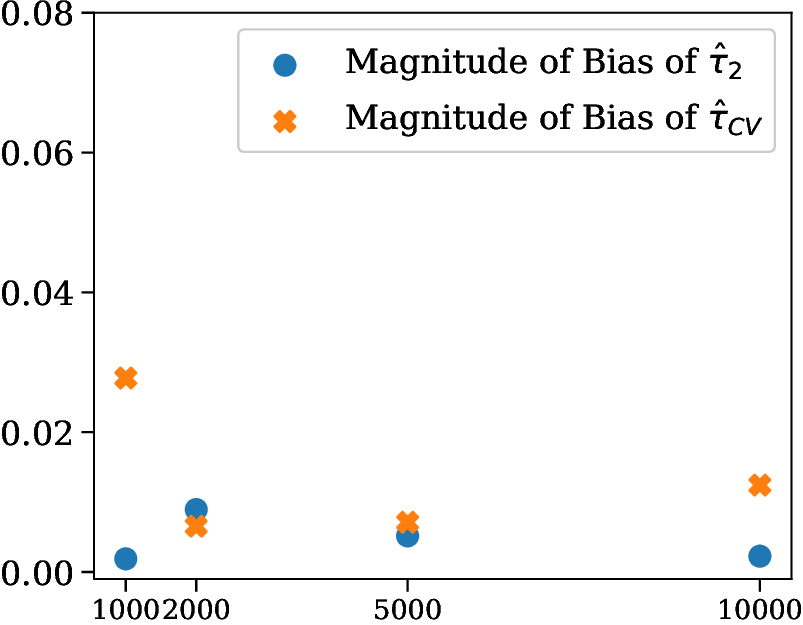} \\
    $n_1$ 
    \end{tabular}
    \caption{\textbf{Scenario 3:} Comparisons of bias as the size of the selection bias dataset $n_1$ increases under the simple logistic model. The size of the observational dataset is fixed at $n_2=1000$.}
    \label{fig:bias_reg_n2-fixed_simple}
\end{figure}

\begin{figure}[!ht]
    \centering
    \begin{tabular}{c}
    Varying coefficient model ($\beta_3 \neq 0$)\\
    \includegraphics[width=0.4\textwidth]{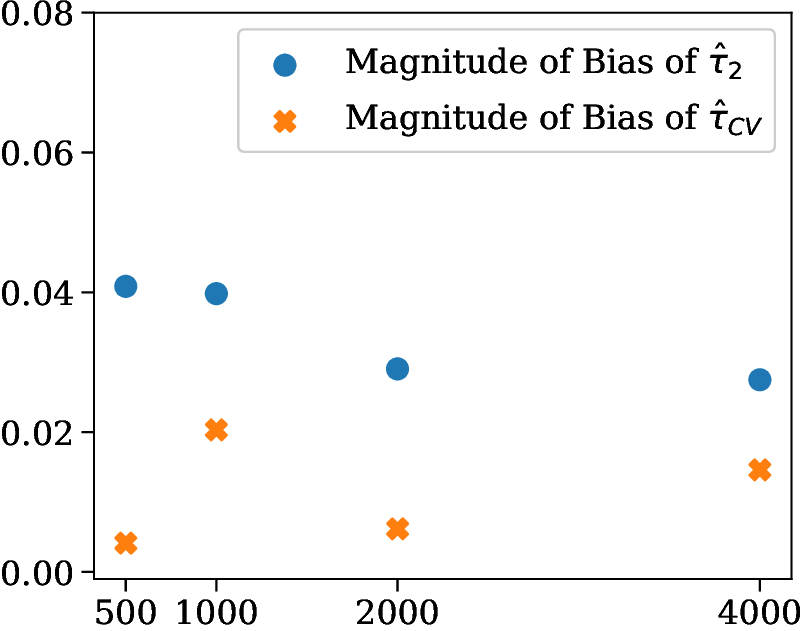} \\
    $n_2$
    \end{tabular}
    \caption{\textbf{Scenario 1:} Comparisons of bias as the size of the observational dataset $n_2$ increases under the logistic model with varying coefficients ($\beta_3 \neq 0$) from Section \ref{sec:experiments_sim} and Appendix \ref{app:experiments_sim_data}. The ratio $n_2/n_1$ is kept constant at $1/10$.}
    \label{fig:bias_reg_ratio-fixed_varying-coef}
\end{figure}

\begin{figure}[!ht]
    \centering
    \begin{tabular}{c}
    Varying coefficient model ($\beta_3 \neq 0$)\\
    \includegraphics[width=0.4\textwidth]{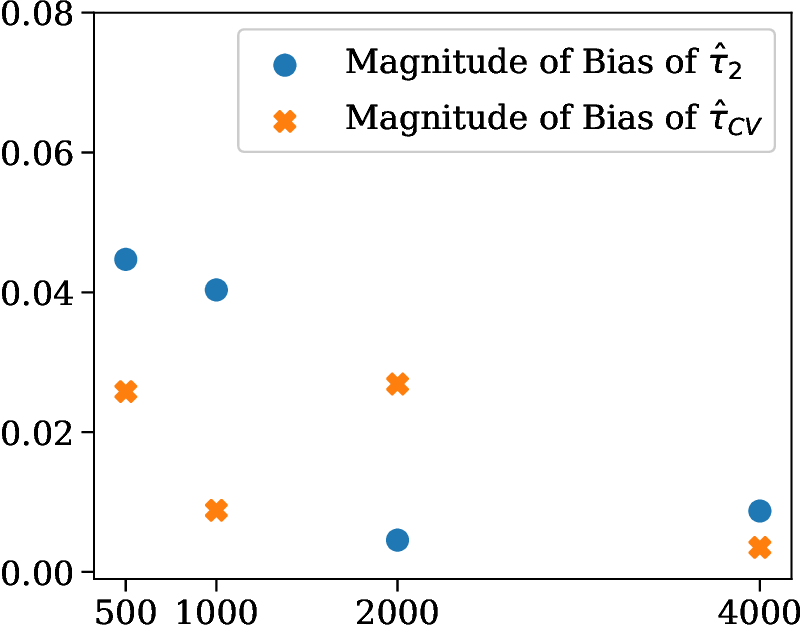} \\
    $n_2$ 
    \end{tabular}
    \caption{\textbf{Scenario 2:} Comparisons of bias as the size of the observational dataset $n_2$ increases under the logistic model with varying coefficients ($\beta_3 \neq 0$) from Section \ref{sec:experiments_sim} and Appendix \ref{app:experiments_sim_data}. The size of the selection bias dataset is fixed at $n_1=10000$.}
    \label{fig:bias_reg_n1-fixed_varying-coef}
\end{figure}

\begin{figure}[!ht]
    \centering
    \begin{tabular}{c}
    Varying coefficient model ($\beta_3 \neq 0$)\\
    \includegraphics[width=0.4\textwidth]{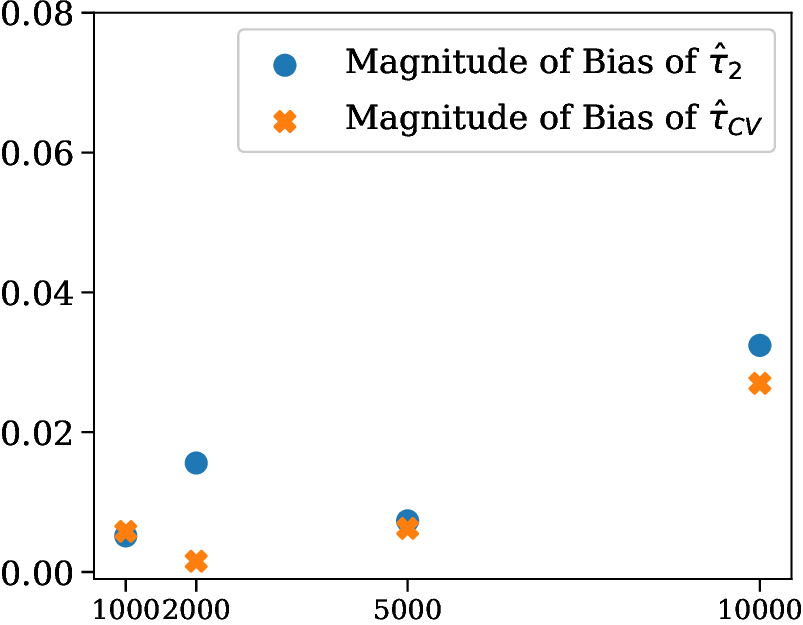} \\
    $n_1$ 
    \end{tabular}
    \caption{\textbf{Scenario 3:} Comparisons of bias as the size of the selection bias dataset $n_1$ increases under the logistic model with varying coefficients ($\beta_3 \neq 0$) from Section \ref{sec:experiments_sim} and Appendix \ref{app:experiments_sim_data}. The size of the observational dataset is fixed at $n_2=1000$.}
    \label{fig:bias_reg_n2-fixed_varying-coef}
\end{figure}

\section{Additional Experimental Details and Results for Real Data Case Studies}\label{app:exp_details}

This section provides additional experimental details and results for the real data case studies. All code for generating data and running the experiments is provided with the supplementary materials.

\subsection{Data generation for case study 1: flu shot encouragement}

We provide a detailed breakdown for generating the selection biased dataset $\mathcal{O}_1$ for Case Study 1 on flu shot encouragement below. Code for this is also included with the supplementary materials.
\begin{enumerate}
    \item Fit a logistic regression model on the original dataset $\mathcal{O}_2$ with inputs $X,Z$ and outcome $Y$ according to \begin{equation}
    P(Y=1 | Z = z, X = x) = g(\beta, z, x) = \frac{e^{\beta_0 + \beta_1 z + \beta_2 x + \beta_3xz}}{1 + e^{\beta_0 + \beta_1 z + \beta_2 x + \beta_3xz}}.
    \end{equation}
    This results in estimated parameters $\hat{\beta}$.
    \item Fit a logistic regression model to estimate the propensity score according to \begin{equation*}
    P(Z = 1 | X = x) = h(a, x) = \frac{e^{a_0 + a_1^Tx}}{1 + e^{a_0 + a_1^Tx}},
    \end{equation*}
    This results in estimated parameters $\hat{a}$.
    \item Sample covariates $\{X_i\}_1^N$ with replacement from $\mathcal{O}_2$.
    \item Generate $Z_i$ according to the estimated propensity score distribution, $P(Z_i = 1 | X = X_i) = h(\hat{a}, X_i)$.
    \item Generate $Y_i$ according to the estimated outcome distribution, $P(Y_i=1 | Z = Z_i, X = X_i) = g(\hat{\beta}, Z_i, X_i)$.
    \item Apply selection bias on the outcome according to the distribution $P(S_i = 1 | Y_i = 1) = 0.9$ to generate the final dataset $\mathcal{O}_1$.
\end{enumerate}

\subsection{Data generation for case study 2: spam email detection}

For case study 2, we directly apply the code provided by the Atlantic Causal Inference Conference (ACIC) Data Challenge to generate both $\OCal_1$ and $\OCal_2$. The ACIC data generation code is publicly available at \url{https://sites.google.com/view/acic2019datachallenge/data-challenge}. The ACIC data generation code modifies existing real datasets in a variety of ways to generate datasets for the ACIC data challenge with known true ATEs. Specifically, we use ACIC's ``modification 1'' of the Spambase spam email detection dataset from UCI, which applies a logistic outcome model very close to the logistic regression models fitted to the actual data. For convenience, we include the exact script for ``modification 1'' in our supplementary materials.

\subsection{Further details on the estimators}

We provide further details on the estimators for the ATE and odds ratio used in the real data experiments.

\textbf{Logistic model with interaction:} We model the outcome $Y$ using a logistic model with an interaction term between $X$ and $Z$ in Eq.~\eqref{eq:logistic_interaction}, repeated here for convenience:
$$P(Y=1 | Z = z, X = x) = g(\beta, z, x) = \frac{e^{\beta_0 + \beta_1 z + \beta_2 x + \beta_3xz}}{1 + e^{\beta_0 + \beta_1 z + \beta_2 x + \beta_3xz}}.$$

To estimate the ATE from the dataset without selection bias $\OCal_2$, we perform logistic regression of $Y$ on $X$, $Z$, and the interaction term $XZ$, to produce estimates $\hat{\beta}_{\OCal_2}$. The ATE estimator is then given by 
$$\widehat{\tau}_2 = g(\hat{\beta}_{\OCal_2}, 1, x) - g(\hat{\beta}_{\OCal_2}, 0, x). $$

We then estimate the conditional odds ratio from $\OCal_2$ as $\widehat{\OR}_2(x) = e^{\hat{\beta}_{1,\OCal_2} + \hat{\beta}_{3,\OCal_2}}$.

We similarly estimate the conditional odds ratio from $\OCal_1$ by following the same logistic regression procedure as above, resulting in $\widehat{\OR}_1(x) = e^{\hat{\beta}_{1,\OCal_1} + \hat{\beta}_{3,\OCal_1}}$.

\textbf{Neural network:} For a more general outcome model, we model the outcome $Y$ using a logistic model with varying coefficients in Eq.~\eqref{eq:varying-coeff}, where the functions $\beta_0^x = f_0(x; \theta)$, $\beta_1^x = f_1(x; \theta)$ make up the two-dimensional output of a single neural network with parameters $\theta$:

$$P(Y=1 | Z = z, X = x) = g(\theta, z, x) = \frac{e^{f_0(x; \theta) +  f_1(x; \theta) z}}{1 + e^{f_0(x; \theta) +  f_1(x; \theta) z}}.$$

The optimization objective for the neural network is the logistic loss on the final outcome prediction, $g(\theta, z, x)$. We optimize the neural network using using ADAM with a default learning rate of 0.001 for $1,000$ epochs with batch size $n_2$. We choose the neural network architecture using five-fold cross validation over $\OCal_2$. Specifically, we search over $\{4, 8\}$ hidden layers, and hidden layer sizes of $\{4, 8, 16, 32\}$. We use the TensorFlow framework and include all code in the supplementary materials.

Once an architecture has been chosen by the method above, we estimate the ATE from $\OCal_2$ by optimizing the neural network parameters $\theta$ over the dataset $\OCal_2$ to obtain an estimate $\hat{\theta}_{\OCal_2}$, and the ATE estimator is given by 
$$\widehat{\tau}_2 = g(\hat{\theta}_{\OCal_2}, 1, x) - g(\hat{\theta}_{\OCal_2}, 0, x).$$

We then estimate the conditional odds ratio from $\OCal_2$ as $\widehat{\OR}_2(x) = e^{f_1(x; \hat{\theta}_{\OCal_2})}$.

Finally, using the same architecture as chosen above, we estimate the conditional odds ratio from $\OCal_1$ by optimizing the neural network parameters $\theta$ over the dataset $\OCal_1$ to obtain an estimate $\hat{\theta}_{\OCal_1}$, resulting in $\widehat{\OR}_1(x) = e^{f_1(x; \hat{\theta}_{\OCal_1})}$.

\subsection{Bias results}

We report the bias for the ATE estimators with and without control variates, $\widehat{\tau}_2, \widehat{\tau}_{CV}$ for the real data experiments. The bias is calculated over $B=300$ bootstrap replicates, and is defined as the difference between the average value of the ATE estimator over $B=300$ bootstrap replicates and the true ATE.

Tables \ref{tab:flu_bias} and \ref{tab:acic_bias} show the bias of $\widehat{\tau}_2, \widehat{\tau}_{CV}$ for each of the different estimation methods for case study 1 and case study 2, respectively. While there was not much difference in bias between $\widehat{\tau}_2$ and $\widehat{\tau}_{CV}$ in the simulation study (Appendix \ref{app:experiments_sim_bias}), we often observe higher bias for $\widehat{\tau}_{CV}$ in finite samples on the real data. This bias is more pronounced in case study 2, which may be due to a high dimensional continuous covariate vector $X$. 

\begin{table}[!ht]
\caption{\textbf{Case study 1:} Bias on flu shot encouragement data with $n_1 = 10,000$}
\label{tab:flu_bias}
\vskip 0.15in
\begin{center}
\begin{small}
\begin{sc}
\begin{tabular}{lccc}
\toprule
Metric & Logistic & Kernel \\
\midrule
Bias $\widehat{\tau}_2$ & \num{4.283E-04} & \num{5.384E-04} \\
Bias $\widehat{\tau}_{CV}$ & \num{6.431E-04} & \num{-4.464E-03} \\
\bottomrule
\end{tabular}
\end{sc}
\end{small}
\end{center}
\vskip -0.1in
\end{table}

\begin{table}[!ht]
\caption{\textbf{Case study 2:} Bias on spam email detection data with $n_1 = 30,000$.}
\label{tab:acic_bias}
\vskip 0.15in
\begin{center}
\begin{small}
\begin{sc}
\begin{tabular}{lccc}
\toprule
Metric & Logistic & NN & Kernel \\
\midrule
& \multicolumn{3}{c}{$n_2 = 3,000$} \\
\cmidrule{2-4}
Bias $\widehat{\tau}_2$ & \num{6.785E-04} & \num{-1.095E-02} & \num{-4.250E-03}\\
Bias $\widehat{\tau}_{CV}$ & \num{2.132E-03} & \num{-1.129E-02} & \num{-6.390E-03} \\
& \multicolumn{3}{c}{$n_2 = 10,000$} \\
\cmidrule{2-4}
Bias $\widehat{\tau}_2$ & \num{7.230E-03} & \num{3.427E-04} & \num{-2.827E-04}\\
Bias $\widehat{\tau}_{CV}$ & \num{3.085E-03} & \num{1.361E-03} & \num{-1.378E-03} \\
\bottomrule
\end{tabular}
\end{sc}
\end{small}
\end{center}
\vskip -0.1in
\end{table}

\end{document}